\documentclass[lettersize,journal]{IEEEtran}
\usepackage{graphicx}
\usepackage[T1]{fontenc}
\usepackage[utf8]{inputenc}
\usepackage{microtype}
\usepackage{booktabs}
\usepackage{amssymb} 
\usepackage[colorlinks=true,linkcolor=black,citecolor=black,urlcolor=black]{hyperref}
\usepackage{amsthm}
\usepackage{amsmath,amsfonts}
\usepackage{algpseudocode}
\usepackage{algorithm}
\usepackage{array}
\usepackage[caption=false,font=normalsize,labelfont=sf,textfont=sf]{subfig}
\usepackage{textcomp}
\usepackage{stfloats}
\usepackage{url}
\usepackage{verbatim}
\usepackage{cite}
\usepackage{subcaption}
\usepackage{caption}
\newtheorem{assumption}{Assumption}
\newtheorem{definition}{Definition}
\newtheorem{theorem}{Theorem}
\newtheorem{lemma}{Lemma}

\DeclareMathOperator*{\argmin}{argmin}

\begin{document}

\title{Augmented Lagrangian Multiplier Network for State-wise Safety in Reinforcement Learning}

\author{Jiaming Zhang,
        Yujie Yang,
        Yao Lyu, 
        Shengbo Eben Li*,
        Liping Zhang*
\thanks{This study is supported by National Natural Foundation of China with 12571323, National Science and Technology Major Project (No 2025ZD1606200), Beijing Natural Science Foundation with L257002 and NSF China with 92582205. It is also partially supported by SunRising AI Lab.}
\thanks{* Corresponding authors: Liping Zhang and Shengbo Eben Li.}

\thanks{Jiaming Zhang and Liping Zhang are with the Department of Mathematical Sciences, Tsinghua University, Beijing, 100084, China (e-mail: jiaming-24@mails.tsinghua.edu.cn, lipingzhang@tsinghua.edu.cn).}

\thanks{Yujie Yang and Yao Lyu are with the School of Vehicle and Mobility, Tsinghua University, Beijing, 100084, China (e-mail: yangyj21@mails.tsinghua.edu.cn, lvyao@mail.tsinghua.edu.cn).}

\thanks{Shengbo Eben Li is with the School of Vehicle and Mobility \& College of AI, Tsinghua University, Beijing, 100084, China (e-mail: lishbo@tsinghua.edu.cn).}
}
\maketitle

\begin{abstract}
Safety is a primary challenge in real-world reinforcement learning (RL). 
Formulating safety requirements as state-wise constraints has become a prominent paradigm. Handling state-wise constraints with the Lagrangian method requires a distinct multiplier for every state, necessitating neural networks to approximate them as a multiplier network. However, applying standard dual gradient ascent to multiplier networks induces severe training oscillations. This is because the inherent instability of dual ascent is exacerbated by network generalization---local overshoots and delayed updates propagate to adjacent states, further amplifying policy fluctuations. Existing stabilization techniques are designed for scalar multipliers, which are inadequate for state-dependent multiplier networks.
To address this challenge, we propose an augmented Lagrangian multiplier network (ALaM) framework for stable learning of state-wise multipliers. ALaM consists of two key components. First, a quadratic penalty is introduced into the augmented Lagrangian to compensate for delayed multiplier updates and establish the local convexity near the optimum, thereby mitigating policy oscillations. Second, the multiplier network is trained via supervised regression toward a dual target, which stabilizes training and promotes convergence. Theoretically, we show that ALaM guarantees multiplier convergence and thus recovers the optimal policy of the constrained problem. Building on this framework, we integrate soft actor-critic (SAC) with ALaM to develop the SAC-ALaM algorithm. Experiments demonstrate that SAC-ALaM outperforms state-of-the-art safe RL baselines in both safety and return, while also stabilizing training dynamics and learning well-calibrated multipliers for risk identification.

\end{abstract}
\begin{IEEEkeywords}
safe reinforcement learning, state-wise constraints, augmented Lagrangian, parameterized multiplier, supervised regression
\end{IEEEkeywords}

\section{Introduction}
\IEEEPARstart{R}{einforcement} learning has demonstrated remarkable performance in solving complex sequential decision-making and optimal control problems. Its enormous potential has driven widespread adoption across various fields, including autonomous driving \cite{li2023reinforcement}, embodied artificial intelligence \cite{gupta2021embodied}, and large language models \cite{rafailov2023direct}. Fundamentally, RL enables agents to achieve optimal performance by learning policies that maximize expected cumulative reward. However, optimizing solely for return is frequently inadequate for real-world deployments. For instance, autonomous vehicles cannot risk collisions simply to minimize travel time. Therefore, it is necessary to incorporate cost signals and feasibility constraints into the training process to ensure policy safety \cite{li2023reinforcement}.

Traditionally, safe RL is formulated as a Constrained Markov Decision Process (CMDP), defining safety requirements by bounding the expected cumulative cost \cite{ray2019benchmarking}. However, this paradigm only bounds the cumulative cost of an entire trajectory, failing to guarantee safety at every individual state. An agent may still violate constraints at specific steps while keeping its total trajectory cost below the permissible threshold. To overcome this limitation, recent methods adopt state-wise constraints to enforce feasibility at every state \cite{ma2021model, ganai2023iterative, yu2023safe}. These constraints provide a more rigorous guarantee, ensuring the agent remains confined to the safe region at every time step \cite{yang2023feasible}.

To solve safe RL problems, existing approaches primarily combine RL algorithms with constrained optimization techniques, among which the Lagrange multiplier method is the most widely used. For example, algorithms such as TRPO-Lag, PPO-Lag, and SAC-Lag embed classic RL methods into Lagrangian formulation to solve CMDPs \cite{ray2019benchmarking, paternain2019constrained}.
RCPO directly integrates the constraint function into the reward to form a Lagrangian composite signal, and employs a multi-timescale update scheme with theoretical convergence guarantees \cite{tessler2019reward}. To restrict the probability of constraint violations under extreme scenarios, the Lagrangian framework is combined with Conditional Value-at-Risk (CVaR) and chance constraints in \cite{chow2018risk}. This work derives the policy gradient for CVaR MDPs and develops specific actor-critic algorithms.
These methods convert the original formulation into a primal-dual minimax problem to balance task performance and safety requirements. Optimization proceeds with an alternating update scheme: the policy optimizes the Lagrangian objective to maximize returns alongside penalized costs, while the multiplier updates via dual gradient ascent to dynamically calibrate the penalty scale. Consequently, this multiplier increases to enforce safety compliance when constraint thresholds are exceeded, and decreases to prioritize reward maximization once safety is guaranteed.
While these methods effectively solve trajectory-based CMDPs using a single scalar multiplier, they are fundamentally ill-equipped for state-wise constraints. Enforcing safety across a continuous state space introduces an infinite number of constraints, which theoretically requires an infinite number of distinct multipliers. A naive workaround is to mathematically aggregate these infinitely many constraints into a single global condition, such as summing the constraint violations across all states, and penalize it with a single scalar multiplier. However, this aggregation essentially assigns a uniform penalty across the entire state space, completely discarding the state-specific nature of the constraints. This homogeneity significantly increases optimization difficulty and usually yields suboptimal behaviors in practice: the policy becomes overly conservative in safe regions while remaining unsafe in hazardous states.

To genuinely satisfy state-wise constraints, the algorithm must maintain distinct multipliers for different states. In continuous state spaces, this is practically achieved by employing parameterized approximation functions, transforming the theoretical infinite multipliers into a neural network. This multiplier network takes the state as input and dynamically outputs a dedicated penalty for that specific state.
For instance, FAC trains a multiplier network to assign specific penalties to individual states \cite{ma2021feasible}. To accommodate local safety constraints, this research defines a state-wise Lagrange function. System optimization then proceeds through the primal-dual procedure over this formulation: the policy minimizes the Lagrangian, and the multiplier parameters adjust via dual gradient ascent.
However, empirical results in FAC exhibit noticeable training oscillations, a drawback explicitly acknowledged by the authors as a primary limitation of their method. In fact, this instability is a broadly recognized challenge even in conventional Lagrangian-based safe RL algorithms that rely on a single scalar multiplier \cite{pmlr-v119-stooke20a, peng2022model}. The root cause lies in the dynamics of standard dual gradient ascent: the adjustments to the multiplier inherently lag behind the rapid shifts in the policy's safety behavior. This persistent delay traps the multiplier in a cycle of insufficient penalization and severe overshooting, alternately leading to constraint violations and overly conservative policies.
To overcome this, the PID-Lagrangian approach models the multiplier update as a control problem, replacing basic gradient ascent with a proportional-integral-derivative controller to dampen these oscillations \cite{pmlr-v119-stooke20a}. Nonetheless, it introduces extra hyperparameters, lacks theoretical guarantees, and frequently fails to outperform standard gradient ascent in practice \cite{spoor2025empirical}. Alternatively, APPO employs the augmented Lagrangian method, which adds a quadratic penalty to enable closed-form multiplier updates \cite{dai2023augmented}.
While APPO has been validated on scalar multipliers, our experiments reveal that its efficacy in mitigating oscillations within multiplier networks is limited and often sacrifices task performance. This inadequacy arises because introducing a parameterized network intrinsically aggravates the multiplier's oscillation problem, driven by two main factors. First, network generalization can erroneously over-penalize safe regions near hazards. Second, policy updates induce distribution shifts that trigger catastrophic forgetting, mistakenly erasing valid penalties for still-hazardous states. To date, achieving stable training for multiplier networks under state-wise constraints remains an open challenge.

To address this challenge, we propose an augmented Lagrangian multiplier network (ALaM) framework for stable learning of state-wise multipliers in safe reinforcement learning. By integrating soft actor-critic (SAC) with ALaM, we further develop SAC-ALaM, a practical safe RL approach. To the best of our knowledge, our approach is the first to successfully stabilize the training of parameterized multipliers. Our main contributions are summarized as follows.

\begin{itemize}
    \item We introduce an augmented Lagrangian formulation with a quadratic penalty for constraint violations. This design compensates for delayed multiplier responses in conventional methods through instantaneous feedback, thereby mitigating policy oscillations. Meanwhile, the quadratic penalty establishes local convexity around the optimum. This structural refinement improves the optimization landscape and alleviates the ill-conditioned saddle-point geometries in the original Lagrangian formulation, preventing policy divergence and therefore leading to substantially more stable training.

    \item We show that direct gradient ascent on multiplier network parameters, as commonly used in existing methods, does not realize true dual ascent in function space because nonlinear parameterization distorts the effective update step. To address this issue, we train the multiplier network via supervised regression toward an analytical dual target. This mechanism more faithfully captures the desired multiplier function and yields improved training stability and convergence behavior.

    \item We provide rigorous theoretical analysis showing that ALaM guarantees multiplier convergence under standard assumptions. Based on this result, we further establish that any limit point of the induced policy sequence is both feasible and optimal for the constrained problem. These results provide a principled foundation for stable multiplier network training in safe RL.

\end{itemize}

Finally, extensive experiments demonstrate that SAC-ALaM achieves state-of-the-art performance in balancing task return and safety compliance. In addition, it learns well-calibrated state-dependent multipliers for reliable local risk identification and exhibits strong generalization to unseen scenarios.

\section{Preliminaries}
\subsection{Reinforcement learning with state constraints}
RL is typically formulated as a Markov Decision Process (MDP) represented by $\mathcal{M} = \langle \mathcal{X}, \mathcal{A}, f, r, \gamma,\mu \rangle$. Here, $\mathcal{X}$ and $\mathcal{A}$ denote the state and action spaces. 
$f: \mathcal{X} \times \mathcal{A} \rightarrow \mathcal{X}$ specifies the state transition dynamics, where $x_{t+1} = f(x_t, a_t)$ defines the next state given the current state $x_t$ and action $a_t$.
Furthermore, $r: \mathcal{X} \times \mathcal{A} \rightarrow \mathbb{R}$ is the reward function, $\gamma \in [0, 1]$ is the discount factor, and $\mu$ represents the initial state distribution. The policy in the MDP considered in this paper is a deterministic function that maps a state to an action: $\pi:\mathcal{X}\rightarrow\mathcal{A}$. Under this policy, the expected cumulative reward $J(\pi)$ is defined as:
\begin{equation*}
    J(\pi)=\mathbb{E}_{x_0 \sim \mu}\left [\sum_{t=0}^{\infty}\gamma^tr(x_t, \pi(x_t))\right].
\end{equation*}
This function is the maximization objective in standard RL.

However, optimizing solely for performance is insufficient for real-world applications; the safety of the policy must also be guaranteed. We formalize this feasibility requirement through state-wise constraints $h(x) \le 0$, where $h$ is a function whose zero sublevel set defines the safe region. Since RL involves sequential decision-making, we enforce this constraint at every time step, i.e., $h(x_t) \le 0,\; \forall t \ge 0$. Consequently, the safe RL problem is formulated as 
\begin{equation} \label{eq:1}
\begin{aligned}
    \max_{\pi} \quad & J(\pi) \\
    \text{s.t.} \quad & h(x_t) \le 0, \quad \forall x_0 \sim \mu, \; t \ge 0.
\end{aligned}
\end{equation}

\subsection{Feasibility function}
Directly enforcing state-wise constraints is computationally intractable, as it involves an infinite number of conditions spanning the continuous state space and the infinite time horizon. To overcome this, we adopt the feasibility function \cite{yang2023feasible, yang2023synthesizing}, which aggregates the infinite-horizon constraints into a single condition determined by the initial state. We formally define it as follows:

\begin{definition}[Feasibility Function]
$F^\pi : \mathcal{X} \to \mathbb{R}$ is a feasibility function for policy $\pi$ if:
\begin{enumerate}
    \item For any initial state $x_0 = x$, the condition $F^\pi(x) \le 0$ holds if and only if $h(x_t) \le 0$ for all $t \ge 0$.
    \item There exists a risky self-consistency operator $\mathcal{T}^\pi : \mathbb{R}^{\mathcal{X}} \to \mathbb{R}^{\mathcal{X}}$ such that:
    \begin{enumerate}
        \item[(i)]  $\mathcal{T}^\pi F^\pi = F^\pi$.
        \item[(ii)]  $\mathcal{T}^\pi$ is a monotone operator.
        \item[(iii)]  $\mathcal{T}^\pi$ is a $\gamma$-contraction under the infinity norm. That is, for any $F_1, F_2 \in \mathbb{R}^{\mathcal{X}}$, $\|\mathcal{T}^\pi F_1 - \mathcal{T}^\pi F_2\|_\infty \le \gamma \|F_1 - F_2\|_\infty$, where $\gamma \in [0, 1)$ is the discount factor.
    \end{enumerate}
\end{enumerate}
\end{definition}

As a concrete example, the cost value function is a standard feasibility function:
\begin{equation*}
    F^{\pi}(x)=\sum_{t=0}^{\infty}\gamma^tc(x_t),\quad \forall x \in \mathcal{X},
\end{equation*}
where $x_0 = x$, $x_{t+1} = f(x_t, \pi(x_t))$, and $c(x) = \mathbf{1}_{h(x)>0}$ is the indicator function of $h$. Since $c(x)$ is non-negative, $F^\pi(x)\le 0$ holds if and only if the state $x$ is long-term feasible under policy $\pi$. The corresponding risky self-consistency operator is $\mathcal{T}^\pi F^\pi(x)=c(x)+\gamma F^\pi(f(x,\pi(x)))$.
Other commonly used feasibility functions include the constraint decay functions and Hamilton-Jacobi reachability functions (e.g., see \cite{yang2026feasibility} for a detailed review).

By utilizing this feasibility function, we effectively bypass the computationally prohibitive infinite-horizon conditions while strictly guaranteeing state safety. Consequently, the original safe RL problem can be equivalently reformulated as follows:
\begin{equation} \label{eq:2}
\begin{aligned}
    \max_{\pi} \quad & J(\pi) \\
    \text{s.t.} \quad & F^{\pi}(x) \le 0,\quad \forall x \in \text{supp}(\mu).
\end{aligned}
\end{equation}
Here, $\text{supp}(\mu)=\{x\in\mathcal{X}:\mu (x)>0\}$ denotes the support of the initial state distribution $\mu$. While the feasibility function converts the infinite-horizon constraints into single-step conditions, the formulation still imposes an infinite number of constraints across the state space. In contrast, the standard CMDPs evaluate only the expected cost under the distribution $\mu$, and reduce the safety requirement to a single global constraint. Therefore, the state-wise constrained framework provides a more rigorous definition of safety.

\subsection{Lagrange multiplier network}
Under the standard CMDP setting, safe RL aims to maximize expected returns and restrict expected costs, i.e., $J_c(\pi):=\mathbb{E}_{x_0 \sim \mu}\left [\sum_{t=0}^{\infty}\gamma^tc(x_t, \pi(x_t))\right] \le d$. Notably, $c$ herein denotes the environmental cost signal, which is distinct from the feasibility indicator function defined previously. Within the Lagrange multiplier paradigm, the dual problem with a scalar multiplier $\lambda \ge 0$ takes the following standard structure:
\begin{equation*}
\max_{\lambda \ge 0} \min_{\theta} L(\theta, \lambda) = -J(\pi_\theta) + \lambda (J_c(\pi_\theta) - d).
\end{equation*}
The algorithm updates parameters through an alternate optimization mechanism. Specifically, the policy minimizes the Lagrangian function, and the multiplier adjusts via the dual gradient ascent step:
\begin{equation*}
\theta \leftarrow \argmin_{\theta} L(\theta, \lambda),\quad \lambda \leftarrow \lambda + \alpha_\lambda (J_c(\pi) - d).
\end{equation*}
This setup aggregates the risks over the entire state distribution into a single global constraint, which fails to guarantee the local safety of specific states.

To establish genuine state-wise safety, the system must satisfy an infinite number of local constraints. In continuous state spaces, it is impractical to maintain an infinite number of independent multipliers. Therefore, recent advanced methods introduce a parameterized multiplier network to approximate the theoretical multipliers \cite{ma2021feasible}. Based on this network, the framework reformulates the Lagrangian to accommodate the state-wise cost value constraints:
\begin{equation*}
L(\theta, w) = -J(\pi_\theta) + \mathbb{E}_{x \sim \mu}\lambda_w(x)(v_c^{\pi_\theta}(x) - d).
\end{equation*}
Under this formulation, the multiplier parameters $w$ adjust via dual gradient ascent based on constraint violations. The corresponding primal-dual updates are:
\begin{equation}\label{eq:SAC-LagNet}
\begin{aligned}
\theta &\leftarrow \argmin_{\theta} L(\theta, w),\\
 w &\leftarrow w + \alpha_w\mathbb{E}_{x\sim \mathcal{B}}\left[(v_c^{\pi_\theta}(x) - d)\nabla_w \lambda_w(x)\right].
\end{aligned}
\end{equation}

Although the multiplier network theoretically equips the algorithm with the capacity to control local risks at each state, the transition from theoretical scalar multipliers to a neural network reshapes the optimization landscape. This is because the gradients of the neural network distort the exact theoretical step size within the functional space. Furthermore, during the alternate update of policy and multiplier parameters, the generalization of multiplier parameters amplifies the oscillations intrinsic to traditional primal-dual methods. This instability constitutes the exact problem that this research aims to resolve.

\section{Augmented Lagrangian multiplier network}
To overcome the instability in multiplier network optimization, we introduce the augmented Lagrangian method (ALM) \cite{bertsekas2014constrained}. ALM augments the standard Lagrange function with a quadratic penalty for constraint violations, which compensates for the inherent response delays and parameter overshoots in traditional methods via instantaneous penalty feedback, thereby mitigating algorithmic oscillations. Furthermore, ALM establishes local convexity around the optimum and reshapes the optimization landscape into a stable basin. This structural refinement resolves the ill-conditioned saddle-point geometries in the Lagrangian formulation, prevents the fluctuations caused by policy divergence and therefore improves training stability. Motivated by these advantages, we develop the augmented Lagrangian multiplier network method to solve problem \eqref{eq:2}.

\subsection{Augmented Lagrangian function}
To construct the augmented framework for problem \eqref{eq:2}, we first address the continuum of state-wise inequality constraints. We introduce a state-dependent, non-negative slack function $p(x) \ge 0$ to convert these inequalities into equivalent equality constraints:
\begin{equation*}
\begin{aligned}
     \max_{\pi,p}\quad &J(\pi)\\
     \text{s.t.}\quad  &F^{\pi}(x) + p(x)=0,\quad p(x) \ge 0, \forall x \in \text{supp}(\mu).
\end{aligned}
\end{equation*}
Unlike the standard Lagrangian, this augmented formulation incorporates a quadratic penalty. Through the association of a multiplier function $\lambda: \mathcal{X} \rightarrow \mathbb{R}$ with these equality constraints, we formulate the augmented Lagrangian function as follows:
\begin{equation}
\begin{aligned}
L(\pi,p,\lambda,\rho)&=-J(\pi)+\int\lambda(x)(F^{\pi}(x)+p(x))\,dx\\
&+\frac{\rho}{2}\int (F^{\pi}(x)+p(x))^2\,dx.
\end{aligned}
\end{equation}
where the integration is performed over the state space $\mathcal{X}$ with respect to the distribution $\mu$, and $\rho > 0$ denotes the penalty factor. Similar to the standard Lagrangian method, we construct the primal-dual formulation with a dynamic penalty $\rho$, which translates into the saddle-point problem:
\begin{equation*}
    (\pi,p,\lambda)=\max_{\lambda}\min_{\pi,p}L(\pi,p,\lambda,\rho).
\end{equation*}
To obtain the optimal policy, ALaM iteratively advances the overall training process through three coupled mechanisms: optimization of the primal variables (the policy and the slack function), update for the multiplier function, and an adaptive adjustment scheme for the penalty factor.

\subsection{Primal variables update}
At the $k$-th iteration, given the current multiplier $\lambda^k$ and penalty factor $\rho^k$, we update the primal variables by solving the following joint minimization problem:
\begin{equation}\label{eq:4}
    \left(\pi^{k+1},p^{k+1}\right)=\argmin_{\pi,p\ge0} L\left(\pi,p,\lambda^k,\rho^k\right).
\end{equation}
Since the objective function is quadratic with respect to the slack variable $p(x)$, we can analytically eliminate it, reducing the formulation \eqref{eq:4} to an optimization problem depending only on $\pi$. That is, for any fixed state $x$, the subproblem for $p(x)$ is given by:
\begin{equation*}
    p^{k+1}(x)=\argmin_{p(x) \ge 0} \lambda^k(x)(F^{\pi}(x)+p(x))+\frac{\rho^k}{2}(F^{\pi}(x)+p(x))^2,
\end{equation*}
which admits a closed-form solution:
\begin{equation*}
     p^{k+1}(x)=\max\left\{-\frac{\lambda^k(x)}{\rho^k}-F^{\pi}(x),0\right\}.
\end{equation*}
Substituting the expression back into $L(\pi, p,\lambda^k, \rho^k)$ produces a simplified augmented Lagrangian:
\begin{equation*}
\begin{aligned}
     L(\pi,\lambda^k,\rho^k)&=-J(\pi)+\int\lambda^k(x)\max\left\{-\frac{\lambda^k(x)}{\rho^k},F^{\pi}(x)\right\}\,dx\\
     &+\frac{\rho^k}{2}\int \max\left\{-\frac{\lambda^k(x)}{\rho^k},F^{\pi}(x)\right\}^2\,dx.
\end{aligned}
\end{equation*}
Consequently, the policy update rule reduces to the subproblem:
\begin{equation}\label{eq:5}
    \pi^{k+1}=\argmin_{\pi} L\left(\pi,\lambda^k,\rho^k\right).
\end{equation}

\subsection{Multiplier update}
We initiate the multiplier update through the derivation of the theoretical update rule within the functional space. An optimal solution for the original problem must fulfill the Karush-Kuhn-Tucker (KKT) stationarity condition:
\begin{equation}\label{eq:6}
    -\partial J(\pi)+\int \lambda(x)\partial F^\pi(x)\,dx=0, 
\end{equation}
where $\partial$ denotes the derivative with respect to $\pi$. Concurrently, at the $k$-th iteration, the newly obtained optimal solutions $\pi^{k+1}$ and $p^{k+1}$ from the augmented Lagrangian subproblem \eqref{eq:4} satisfy their stationarity optimality condition:
\begin{equation}\label{eq:7}
\begin{aligned}
    & -\partial J(\pi^{k+1})+\\
    & \int \left(\lambda^k(x)+\rho^k\left(F^{\pi^{k+1}}(x)+p^{k+1}(x)\right)\right)\partial F^{\pi^{k+1}}(x)\,dx=0.
\end{aligned}
\end{equation}
To align the augmented condition \eqref{eq:7} with the KKT condition \eqref{eq:6}, the multiplier function must be updated to match the effective multiplier term $\lambda^k + \rho^k(F^{\pi^{k+1}} + p^{k+1})$. Substituting the derived closed-form solution for the optimal slack variable $p^{k+1}$, we obtain the dual target:
\begin{equation*}
    \lambda_{\text{target}}^{k+1}(x) = \max \left\{ \lambda^k(x)+\rho^kF^{\pi^{k+1}}(x),0\right\},\quad \forall x \in \text{supp}(\mu).
\end{equation*}
Because the state space $\mathcal{X}$ contains an infinite continuum of constraints, a discrete point-wise update for every multiplier is computationally intractable. This necessitates the representation of the multiplier function via a neural network. Through this parameterization, we reformulate the dual update as a supervised regression task. Specifically, the network is trained to match the theoretical target by minimizing the mean squared error objective:
\begin{equation}\label{eq:8}
    L_\lambda=\int\left(\lambda(x)-\lambda_{\text{target}}^{k+1}(x)\right)^2\,dx.
\end{equation}

\subsection{Penalty factor update}
Unlike traditional multiplier methods that alternate between policies and multipliers, ALaM dynamically adjusts the penalty factor to provide instantaneous feedback on constraint violations. This adaptive implementation serves as a crucial compensation for delayed dual response, and thereby suppresses algorithmic oscillations. To illustrate this mechanism, consider the scenario where constraints are active. Under such conditions, the closed-form update of the dual variable simplifies to:
\begin{equation*}
    \lambda^{k+1}(x) = \lambda^k(x)+\rho^kF^{\pi^{k+1}}(x),
\end{equation*}
which implies
\begin{equation*}
    F^{\pi^{k+1}}(x) =\frac{1}{\rho^k}\left(\lambda^{k+1}(x)-\lambda^k(x)\right).
\end{equation*}
Therefore, the constraint violation at any state $x$ scales inversely with the penalty factor $\rho^k$. This indicates that we can reduce infeasibility by increasing the penalty parameter. To quantify the overall constraint violation, we monitor the following metric at each iteration:
\begin{equation*}
\begin{aligned}
    v^k&=\sqrt{\int\left(p^{k+1}(x)+F^{\pi^{k+1}}(x)\right)^2\,dx}\\
    &=\sqrt{\int\max\left\{F^{\pi^{k+1}}(x),-\frac{\lambda^{k}(x)}{\rho^{k}}\right\}^2\,dx}.
\end{aligned}
\end{equation*}
If $v^k > 1/\rho^k$, we amplify the penalty factor via $\rho^{k+1} \leftarrow \sigma \rho^k$, where $\sigma > 1$ is a scaling coefficient. Classical augmented Lagrangian theory \cite{bertsekas2014constrained} shows that this method possesses exact penalty properties. Once $\rho$ exceeds a finite threshold, the optimization of the augmented objective recovers the optimal solution of the original constrained problem. We utilize this property to improve the penalty factor update scheme. Specifically, we restrict the penalty parameter to an upper bound $\rho_{\max}$ through the assignment $\rho^{k+1} \leftarrow \min\{\sigma \rho^k, \rho_{\max}\}$. This clipping mechanism safely prevents gradient explosion during training while effectively guaranteeing constraint satisfaction.

\section{Convergence analysis}
In this section, we establish the theoretical convergence guarantees for ALaM. We assume that the state space $\mathcal{X}$ is compact, and the action space $\mathcal{A}$ is a compact and convex set. These assumptions are well-justified by practical continuous control scenarios, such as finite sensor ranges and actuator saturation. The Lagrangian $L(\pi, \lambda)$ and the dual functional $d(\lambda)$ are defined as:
\begin{equation*}
    L(\pi,\lambda)=-J(\pi)+\int\lambda(x)F^{\pi}(x)\,dx,\quad d(\lambda)=\inf_{\pi}L(\pi,\lambda).
\end{equation*}

For a rigorous analysis, we formulate the dual problem in a Hilbert space. Let $H = L^2(\mathcal{X})$ be the Hilbert space of square-integrable functions defined on $\mathcal{X}$, with its closed convex positive cone denoted by $H_+ = \{ \lambda \in H \mid \lambda(x) \ge 0, \, \forall x \in \mathcal{X} \}$. Equipped with the standard inner product $\langle f, g \rangle = \int_{\mathcal{X}} f(x)g(x) \, dx$ and the induced norm $\|f\|^2 = \langle f, f \rangle$, the standard and augmented Lagrangian can be rewritten as:
\begin{equation*}
    L(\pi,\lambda)=-J(\pi)+\left \langle\lambda, F^\pi \right \rangle,
\end{equation*}
and 
\begin{align*}
     L(\pi,\lambda,\rho)&=-J(\pi)+ \left\langle\lambda, \max\left\{-\frac{\lambda}{\rho}, F^\pi\right\}  \right\rangle\\
     & +\frac{\rho}{2}\left\|\max\left\{ -\frac{\lambda}{\rho}, F^\pi\right\}\right\|^2\\
     &=-J(\pi)+\frac{1}{2\rho}\left\|\max\left\{ 0, \lambda+\rho F^\pi\right\}\right\|^2-\frac{1}{2\rho}\left\|\lambda\right\|^2.
\end{align*}
Throughout the analysis, we assume that the multiplier sequence remains square-integrable, i.e., $\{ \lambda^k \} \subset H$. This condition naturally holds in practice: since the multiplier is parameterized by a neural network over the compact space $\mathcal{X}$, it is inherently bounded and thus square-integrable. Furthermore, because neural networks possess sufficient expressive power, we develop our theoretical analysis under the assumption of exact function approximation. The update rules are summarized as follows:
\begin{equation*}
\left\{\begin{aligned}
\pi^{k+1} &= \argmin_{\pi} L(\pi,\lambda^k, \rho^k),\\
\lambda^{k+1}(x) &= \max\left\{0, \lambda^k(x)+\rho^k F^{\pi^{k+1}}(x)\right\},\quad \forall x,\\[8pt]
\rho^{k+1} &= \min\{\sigma \rho^k, \rho_{\max}\}, \quad \text{if } v^k>1/\rho^k.
\end{aligned}\right.
\end{equation*}
Since the sequence $\{ \rho^k \}$ geometrically increases and is bounded by $\rho_{\max}$, it remains constant at a terminal value $\bar{\rho}$ after finite iterations. Our subsequent convergence analysis relies on the following assumptions.

\begin{assumption}
The convergence requires the following conditions: 
\begin{enumerate}
    \item $J(\pi)$ and the mapping $\pi \mapsto F^\pi$ are bounded over the policy space.

    \item  $J(\pi)$ and $\pi \mapsto F^\pi$ are strongly continuous with respect to $\pi$.
    
    \item $J(\pi)$ is concave, and $F^\pi(x)$ is convex with respect to $\pi$ for all $x \in \mathcal{X}$.
\end{enumerate}
\end{assumption}

Several remarks regarding these assumptions are in order. Assumption 1 naturally holds in physical systems, where hardware and energy limits bound the returns and costs. Assumption 2 imposes standard continuity to ensure convergence under the weak topology \cite[Definition 26.1]{zeidler2013nonlinear}. Finally, while the literature typically achieves convexity by mapping RL problems into the occupancy measure space \cite{paternain2019constrained}, extending this transformation to our setting poses significant analytical challenges due to the infinite-dimensional dual variable and the state-wise constraints. Thus, Assumption 3 serves as a necessary condition to guarantee the existence of a global saddle point.

Since the action space $\mathcal{A}$ is compact, any deterministic policy $\pi: \mathcal{X} \to \mathcal{A}$ is bounded. Combined with the fact that the compact state space $\mathcal{X}$ possesses a finite measure, this ensures that the policy space $\Pi$ is uniformly bounded within the Hilbert space $L^2(\mathcal{X})$. We now consider the Lagrange dual problem:
\begin{equation*}
    \max_{\lambda \in H_+}d(\lambda),
\end{equation*}
which is equivalent to solving the unconstrained minimization problem:
\begin{equation}\label{eq:D}
    \min_\lambda\Phi(\lambda),
\end{equation}
where $\Phi(\lambda)=-d(\lambda)+\delta_{H_+}(\lambda)$. Here, $\delta_{H_+}$ denotes the indicator function of $H_+$, mapping to $0$ if $\lambda \in H_+$ and $\infty$ otherwise. Finally, let $T = \partial \Phi$ denote the subdifferential operator of $\Phi$.

\begin{lemma}\label{lemma:1}
$T $ is a maximal monotone operator. Specifically, an operator $T$ is monotone if $\langle y_1 - y_2, \lambda_1 - \lambda_2 \rangle \ge 0$ for any $y_1 \in T(\lambda_1)$ and $y_2 \in T(\lambda_2)$, and it is maximal in that there exists no other monotone operator $T'$ satisfying $\text{Graph}(T) \subsetneq \text{Graph}(T')$, where $\text{Graph}(T)=\{(\lambda, y)\in H \times H: y \in T(\lambda)\}$.
\end{lemma}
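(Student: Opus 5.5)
The plan is to invoke Rockafellar's theorem: the subdifferential of a proper, convex, lower semicontinuous function on a real Hilbert space is maximal monotone. It therefore suffices to show that $\Phi=-d+\delta_{H_+}$ has these three properties on $H=L^2(\mathcal{X})$.

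\emph{Convexity and lower semicontinuity.} For each fixed $\pi$, the map $\lambda\mapsto L(\pi,\lambda)=-J(\pi)+\langle\lambda,F^\pi\rangle$ is affine. It is also continuous on $H$, because by Assumption 1 the set $\{F^\pi\}$ is bounded in $H$; boundedness in sup norm gives boundedness in $L^2$, since $\mathcal{X}$ is compact and has finite measure. Hence
\[
-d(\lambda)=\sup_{\pi}\bigl(J(\pi)-\langle\lambda,F^\pi\rangle\bigr)
\]
is a pointwise supremum of continuous affine functionals, so it is convex and lower semicontinuous, possibly taking the value $+\infty$. The indicator $\delta_{H_+}$ is convex and lower semicontinuous because $H_+$ is a closed convex cone: an $L^2$-convergent sequence has an a.e.\ convergent subsequence, which preserves nonnegativity. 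A sum of convex lsc functions is convex and lsc. No concavity or convexity of $J$ and $F$ is needed for this step.

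\emph{Properness.} $\Phi$ never equals $-\infty$, since $-d(\lambda)\ge J(\pi_0)-\langle\lambda,F^{\pi_0}\rangle>-\infty$ for any fixed $\pi_0$. It is finite somewhere: at $\lambda=0$ we get $\Phi(0)=\sup_\pi J(\pi)<\infty$ by Assumption 1. More generally, for any $\lambda\in H$, Cauchy--Schwarz and the uniform bound $\|F^\pi\|\le M$ give
\[
-d(\lambda)\le \sup_\pi J(\pi)+\|\lambda\|M<\infty .
\]
So $-d$ is in fact finite, convex and lsc on all of $H$, and therefore continuous there.

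\emph{Conclusion.} By Rockafellar's theorem, $T=\partial\Phi$ is maximal monotone. As a sanity check on monotonicity alone, take $y_i\in T(\lambda_i)$ and add the two subgradient inequalities $\Phi(\lambda_2)\ge\Phi(\lambda_1)+\langle y_1,\lambda_2-\lambda_1\rangle$ and the symmetric one. This gives $\langle y_1-y_2,\lambda_1-\lambda_2\rangle\ge0$. Maximality is the part that genuinely needs the theorem; I would cite it, for example Rockafellar (1970) or Bauschke--Combettes, Thm.~20.25.

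\emph{Main obstacle.} The only delicate point is making sure the hypotheses really hold in this infinite-dimensional setting, namely that $\lambda\mapsto\langle\lambda,F^\pi\rangle$ is well defined and continuous on $H$. That requires $F^\pi\in L^2$ with a bound uniform in $\pi$, which is exactly the boundedness in Assumption 1 combined with the finite measure of $\mathcal{X}$. One should also note that $\partial\Phi$ is the convex-analysis subdifferential of $\Phi$. The sum rule $\partial(-d+\delta_{H_+})=\partial(-d)+N_{H_+}$ is not needed for the lemma, though it follows from continuity of $-d$ if it is wanted later.
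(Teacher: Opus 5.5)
Your proposal is correct and follows the same route as the paper: Rockafellar's theorem applied to $\Phi=-d+\delta_{H_+}$, with convexity and lower semicontinuity coming from the pointwise supremum of affine functionals and the closed convex cone $H_+$, and properness from Assumption 1. Your extra checks fill in details the paper leaves implicit: continuity of $\lambda\mapsto\langle\lambda,F^\pi\rangle$ via an $L^2$ bound on $F^\pi$, closedness of $H_+$ via an a.e.\ convergent subsequence, and properness evaluated explicitly at $\lambda=0$.
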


\begin{proof}
By Rockafellar's theorem \cite[Theorem A]{rockafellar1970maximal}, the subdifferential of a proper, lower semi-continuous (l.s.c.), and convex functional is guaranteed to be a maximal monotone operator. We verify these properties for $\Phi$. Since $H_+$ is a closed convex set, its indicator functional $\delta_{H_+}$ is convex and l.s.c. The negative dual functional $-d(\lambda) = -\inf_\pi L(\pi, \lambda) = \sup_\pi \{-L(\pi, \lambda)\}$ is the point-wise supremum of a family of affine functions in $\lambda$. Such a supremum is convex and l.s.c. Their sum $\Phi = -d + \delta_{H_+}$ preserves both properties. By Assumption 1, since $J(\pi)$ and $F^\pi$ are bounded over $\Pi$, $\Phi(\lambda)$ is proper. Therefore, we conclude that $T = \partial \Phi$ is maximal monotone.
\end{proof}

\begin{lemma} \label{lemma:2}
    For any index $k\ge 1$, we have $-F^{\pi^k} \in \partial(-d)(\lambda^k)$. 
\end{lemma}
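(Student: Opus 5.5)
Since $-d(\lambda)=\sup_{\pi}\{J(\pi)-\langle\lambda,F^\pi\rangle\}$, I will show $-F^{\pi^k}$ is a subgradient of $-d$ at $\lambda^k$ by the usual Danskin-type argument: it suffices to prove that $\pi^k$ minimizes the ordinary Lagrangian $L(\cdot,\lambda^k)$, i.e. $d(\lambda^k)=L(\pi^k,\lambda^k)$. Granting this, for any $\mu\in H$,
\begin{equation*}
-d(\mu)\ge -L(\pi^k,\mu)=-L(\pi^k,\lambda^k)-\langle \mu-\lambda^k,F^{\pi^k}\rangle=-d(\lambda^k)+\langle -F^{\pi^k},\mu-\lambda^k\rangle,
\end{equation*}
which is the subgradient inequality.

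The main step is to show $\pi^k\in\argmin_\pi L(\pi,\lambda^k)$ using $\pi^k=\argmin_\pi L(\pi,\lambda^{k-1},\rho^{k-1})$ and $\lambda^k=\max\{0,\lambda^{k-1}+\rho^{k-1}F^{\pi^k}\}$. I will compare the two functionals via the convex majorization idea. Write $\rho=\rho^{k-1}$ and $g(y)=\frac{1}{2\rho}\|\max\{0,\lambda^{k-1}+\rho y\}\|^2$. This functional is convex and Fréchet differentiable on $H$ with gradient $\max\{0,\lambda^{k-1}+\rho y\}$, so $\nabla g(F^{\pi^k})=\lambda^k$. By convexity of $g$, for every $\pi$,
\begin{equation*}
g(F^\pi)\ge g(F^{\pi^k})+\langle\lambda^k,F^\pi-F^{\pi^k}\rangle .
\end{equation*}
Hence the augmented Lagrangian satisfies $L(\pi,\lambda^{k-1},\rho)\ge L(\pi^k,\lambda^{k-1},\rho)+\big[L(\pi,\lambda^k)-L(\pi^k,\lambda^k)\big]$ only in the wrong direction. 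So instead I will use first-order optimality. By Assumption 1.3, $\pi\mapsto -J(\pi)+g(F^\pi)$ is convex, because $g$ is convex and nondecreasing in the pointwise order and $F^\pi$ is pointwise convex. Likewise $\pi\mapsto L(\pi,\lambda^k)$ is convex, since $\lambda^k\ge0$. The optimality of $\pi^k$ for the augmented problem gives a stationarity condition. By the chain rule for the monotone convex outer function $g$, this condition is exactly $0\in\partial_\pi\big(-J+\langle\lambda^k,F^{\cdot}\rangle\big)(\pi^k)$, which is the stationarity condition \eqref{eq:7} rewritten with the effective multiplier $\lambda^k$. Convexity then upgrades stationarity to global minimality of $L(\cdot,\lambda^k)$ at $\pi^k$.

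To avoid differentiability issues, I will argue directly along segments, using convexity of the policy set (convex $\mathcal A$). Take any $\pi$ and set $\pi_t=\pi^k+t(\pi-\pi^k)$. Optimality gives $0\le \frac{1}{t}[L(\pi_t,\lambda^{k-1},\rho)-L(\pi^k,\lambda^{k-1},\rho)]$. Bound $g(F^{\pi_t})-g(F^{\pi^k})\le\langle\nabla g(F^{\pi_t}),F^{\pi_t}-F^{\pi^k}\rangle\le t\langle\nabla g(F^{\pi_t}),F^\pi-F^{\pi^k}\rangle$, using pointwise convexity and $\nabla g\ge0$. Bound $-J(\pi_t)+J(\pi^k)\le t(-J(\pi)+J(\pi^k))$ by concavity. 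Dividing by $t$ and letting $t\downarrow0$, strong continuity (Assumption 1.2) gives $F^{\pi_t}\to F^{\pi^k}$ in $H$, and Lipschitz continuity of $\nabla g$ gives $\nabla g(F^{\pi_t})\to\lambda^k$. The result is $0\le -J(\pi)+J(\pi^k)+\langle\lambda^k,F^\pi-F^{\pi^k}\rangle$, i.e. $L(\pi,\lambda^k)\ge L(\pi^k,\lambda^k)$.

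I expect this limiting step to be the main obstacle: it needs continuity of $\pi\mapsto F^\pi$ in the $L^2$ norm and boundedness (Assumption 1.1) for the inner products to converge. The case $k\ge1$ is exactly what guarantees that $\lambda^k$ was produced by an augmented step.
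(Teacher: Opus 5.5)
Your proof is correct and has the same skeleton as the paper's: show that $\pi^k$ minimizes the ordinary Lagrangian $L(\cdot,\lambda^k)$, so that $d(\lambda^k)=L(\pi^k,\lambda^k)$, and then read off the subgradient inequality from $d(\mu)\le L(\pi^k,\mu)$. Where you differ is in the key step.

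The paper writes the first-order condition $0\in\partial_\pi L(\pi^{k},\lambda^{k-1},\rho^{k-1})$. It then uses a chain rule to replace the derivative of the penalty $\frac{1}{2\rho}\|\max\{0,\lambda^{k-1}+\rho F^\pi\}\|^2$ by that of $\langle\lambda^{k},F^\pi\rangle$, and invokes convexity to upgrade stationarity to global minimality. This is short and mirrors the heuristic behind the dual target. However, it tacitly relies on a subdifferential chain rule for a nonsmooth composite in infinite dimensions, and it leaves the constraint $\pi\in\Pi$ implicit.

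Your segment argument in the final paragraph proves exactly this step using only zeroth-order information:
\begin{itemize}
\item optimality of $\pi^k$ along the feasible segment $\pi_t$ (convexity of $\mathcal A$);
\item the pointwise gradient inequality for the convex $C^1$ scalar penalty;
\item nonnegativity of $\nabla g$ combined with pointwise convexity of $F^\pi$;
\item concavity of $J$;
\item norm continuity of $\pi\mapsto F^\pi$ to let $t\downarrow 0$, which the paper's continuity assumption provides under either reading of ``strongly continuous''.
\end{itemize}
No differentiability of $J$ or $F$ is needed, so yours is a more rigorous rendering of the same idea. The intermediate ``wrong direction'' remark and the informal chain-rule paragraph can simply be deleted.

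You also check the subgradient inequality for all $\mu\in H$, which is what membership in $\partial(-d)(\lambda^k)$ literally requires. The paper states it only for $\lambda\in H_+$, though its argument works verbatim on all of $H$.
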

\begin{proof}
For $k\ge 0$, the update rule $\pi^{k+1}=\argmin_{\pi} L(\pi, \lambda^k,\rho^k)$ implies the following first-order stationarity condition:
\begin{equation}\label{eq:9}
    \begin{aligned}
    0&\in \partial_\pi L\left(\pi^{k+1},\lambda^k, \rho^k\right)\\
    &=\left.\partial_\pi \left(-J(\pi)+\frac{1}{2\rho^k}\left\|\max\left\{0,\lambda^k+\rho^kF^{\pi}\right\}\right\|^2\right)\right|_{\pi=\pi^{k+1}}\\
    &=\partial_\pi\left.\left(- J(\pi)+\left\langle \max\left\{0, \lambda^k+\rho^kF^{\pi^{k+1}}\right\}, F^\pi\right \rangle\right) \right|_{\pi=\pi^{k+1}}\\
    &=-\partial_\pi J\left(\pi^{k+1}\right)+\partial_\pi\left\langle \lambda^{k+1}, F^{\pi^{k+1}}\right\rangle\\
    &=\partial_\pi L\left(\pi^{k+1},\lambda^{k+1}\right).
\end{aligned}
\end{equation}
By Assumption 2, both $-J(\pi)$ and $F^\pi$ are convex in $\pi$, $L(\pi,\lambda)$ is also convex. Consequently, Eq. \eqref{eq:9} ensures that $\pi^{k+1} \in \argmin_{\pi} L(\pi, \lambda^{k+1})$, evaluating the dual function as:
\begin{equation}\label{eq:10}
    d\left(\lambda^{k+1}\right)=L\left(\pi^{k+1},\lambda^{k+1}\right)=-J\left(\pi^{k+1}\right)+\big\langle \lambda^{k+1}, F^{\pi^{k+1}}\big\rangle.
\end{equation}
For any $\lambda \in H_+$, the definition of the dual function provides:
\begin{equation}\label{eq:11}
    d(\lambda)=\inf_\pi L(\pi,\lambda)\le -J\big(\pi^{k+1}\big)+\big\langle \lambda,  F^{\pi^{k+1}}\big\rangle,
\end{equation}
substituting Eq. \eqref{eq:10} into Eq. \eqref{eq:11}, we obtain
\begin{equation*}
    -d(\lambda)\ge -d\left(\lambda^{k+1}\right)+\big\langle-F^{\pi^{k+1}}, \lambda-\lambda^{k+1}\big\rangle.
\end{equation*}
By the definition of the subdifferential, this implies $-F^{\pi^{k+1}} \in \partial (-d)(\lambda^{k+1})$, which completes the proof.
\end{proof}

We are now positioned to establish the convergence of ALaM. Since the dual problem \eqref{eq:D} is convex, a multiplier $\lambda^\star$ is optimal if and only if $0 \in \partial\Phi(\lambda^\star)$. Let $\Omega^\star = \{\lambda \in H : 0 \in \partial\Phi(\lambda)\}$ denote the set of optimal dual solutions.
The following theorem demonstrates that the sequence of multiplier converges weakly to a point in $\Omega^\star$.

\begin{theorem}\label{thm:1}
    The sequence $\{\lambda^k\}$ converges weakly to an optimal dual solution $\lambda^\star \in \Omega^\star$.
\end{theorem}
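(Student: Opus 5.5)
The plan is to identify the multiplier update as an exact proximal point step on the dual problem \eqref{eq:D}, and then invoke Rockafellar's convergence theory for the proximal point algorithm applied to the maximal monotone operator $T=\partial\Phi$ from Lemma~\ref{lemma:1}. Since $\rho^k$ is eventually constant at $\bar\rho>0$, we may discard finitely many iterations. We may then treat the algorithm as the proximal point method with constant step $\bar\rho$, or more generally with steps $\rho^k$ bounded away from zero.

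\emph{Step 1: proximal inclusion.} From the update $\lambda^{k+1}=\max\{0,\lambda^k+\rho^kF^{\pi^{k+1}}\}$, read pointwise, $\lambda^{k+1}$ is the projection of $\lambda^k+\rho^kF^{\pi^{k+1}}$ onto $H_+$. The variational characterization of projection onto the closed convex cone $H_+$ gives
\begin{equation*}
\lambda^k+\rho^kF^{\pi^{k+1}}-\lambda^{k+1}\in N_{H_+}(\lambda^{k+1})=\partial\delta_{H_+}(\lambda^{k+1}).
\end{equation*}
Lemma~\ref{lemma:2} supplies $-F^{\pi^{k+1}}\in\partial(-d)(\lambda^{k+1})$. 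Adding the two inclusions, after multiplying the second by $\rho^k$, yields
\begin{equation*}
\frac{\lambda^k-\lambda^{k+1}}{\rho^k}\in\partial(-d)(\lambda^{k+1})+\partial\delta_{H_+}(\lambda^{k+1})\subseteq\partial\Phi(\lambda^{k+1})=T(\lambda^{k+1}).
\end{equation*}
Here the sum rule is needed only in the easy direction ``sum of subdifferentials $\subseteq$ subdifferential of sum''. Hence $\lambda^{k+1}=(I+\rho^kT)^{-1}\lambda^k$, and the resolvent is single-valued and firmly nonexpansive by maximality (Minty).

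\emph{Step 2: Fejér monotonicity and weak convergence.} Assume $\Omega^\star\neq\emptyset$; this follows from strong duality under the concavity/convexity of Assumption~1 together with a Slater-type condition, and is implicit in the theorem. Take any $\lambda^\star\in\Omega^\star$, so $0\in T(\lambda^\star)$. Monotonicity of $T$ applied to the pairs $(\lambda^{k+1},(\lambda^k-\lambda^{k+1})/\rho^k)$ and $(\lambda^\star,0)$ gives $\langle\lambda^k-\lambda^{k+1},\lambda^{k+1}-\lambda^\star\rangle\ge0$. This yields
\begin{equation*}
\|\lambda^{k+1}-\lambda^\star\|^2\le\|\lambda^k-\lambda^\star\|^2-\|\lambda^{k+1}-\lambda^k\|^2 .
\end{equation*}
Thus $\{\lambda^k\}$ is bounded, $\|\lambda^k-\lambda^\star\|$ converges for every $\lambda^\star\in\Omega^\star$, and $\sum_k\|\lambda^{k+1}-\lambda^k\|^2<\infty$. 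Since $\rho^k\ge\rho^0>0$, it follows that $y^{k+1}:=(\lambda^k-\lambda^{k+1})/\rho^k\to0$ strongly.

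Boundedness in the Hilbert space $H$ gives weakly convergent subsequences $\lambda^{k_j}\rightharpoonup\hat\lambda$ with $y^{k_j}\in T(\lambda^{k_j})$ and $y^{k_j}\to0$. The graph of a maximal monotone operator is sequentially closed in the weak$\times$strong topology. To see this, for any $(\mu,z)\in\mathrm{Graph}(T)$ pass to the limit in $\langle y^{k_j}-z,\lambda^{k_j}-\mu\rangle\ge0$, which gives $\langle -z,\hat\lambda-\mu\rangle\ge0$, and invoke maximality. Hence $0\in T(\hat\lambda)$, i.e.\ $\hat\lambda\in\Omega^\star$. Finally, Opial's lemma shows the whole sequence converges weakly to a single point. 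It applies because every weak cluster point lies in $\Omega^\star$ and $\|\lambda^k-\lambda^\star\|$ converges for each $\lambda^\star\in\Omega^\star$. The argument is to take two cluster points and expand $\|\lambda^k-\hat\lambda_1\|^2-\|\lambda^k-\hat\lambda_2\|^2$, which forces $\hat\lambda_1=\hat\lambda_2$.

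The main obstacle is Step 1, namely justifying that the primal stationarity used in Lemma~\ref{lemma:2} really produces the exact proximal inclusion. This requires care about sign conventions, the projection identity in $L^2$, and the subdifferential sum rule. A second delicate point is the nonemptiness of $\Omega^\star$, which I would state as a consequence of the saddle-point existence claimed after Assumption~1 rather than prove from scratch.
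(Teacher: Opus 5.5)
Your proposal is correct and follows the paper's proof essentially step for step: the projection/normal-cone inclusion combined with Lemma~\ref{lemma:2} places $(\lambda^k-\lambda^{k+1})/\rho^k$ in $T(\lambda^{k+1})$, monotonicity against $(\hat\lambda,0)$ gives Fej\'er monotonicity and $\|\lambda^k-\lambda^{k+1}\|\to 0$, maximality gives weak$\times$strong closedness of the graph so that weak cluster points lie in $\Omega^\star$, and Opial's lemma gives weak convergence of the whole sequence. Two of your remarks sharpen the paper's write-up: only the easy inclusion $\partial(-d)+\partial\delta_{H_+}\subseteq\partial\Phi$ is needed (the paper writes an equality), and $\Omega^\star\neq\emptyset$ must be a standing hypothesis (the paper uses it silently), though a classical Slater argument will not supply it here because $H_+$ has empty interior in $L^2(\mathcal{X})$.
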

\begin{proof}
The dual update rule $\lambda^{k+1}=\max\big\{0,\lambda^k+\rho^kF^{\pi^{k+1}}\big\}$ is equivalent to the projection $ \lambda^{k+1}=P_{H_+}\big(\lambda^k+\rho^kF^{\pi^{k+1}}\big)$. Let $N_{H_+}(\lambda)=\{y\in H:\langle y, \lambda'-\lambda\rangle \le 0,\; \forall \lambda' \in H_+\}$ be the normal cone of $H_+$ at $\lambda$. The projection implies
\begin{equation*}
    \lambda^k+\rho^kF^{\pi^{k+1}}-\lambda^{k+1} \in N_{H_+}(\lambda^{k+1}).
\end{equation*}
Since the subdifferential $\partial\delta_{H_+}$ coincides with the normal cone $N_{H_+}$ for convex indicator functions, it follows that:
\begin{equation*}
    \frac{\lambda^k-\lambda^{k+1}}{\rho^k} \in -F^{\pi^{k+1}}+\partial\delta_{H_+}(\lambda^{k+1}).
\end{equation*}
Invoking Lemma \ref{lemma:2}, we have 
\begin{equation*}
    \frac{\lambda^k-\lambda^{k+1}}{\rho^k} \in \partial(-d)(\lambda^{k+1})+\partial\delta_{H_+}(\lambda^{k+1})=T(\lambda^{k+1}),
\end{equation*}
which can be rearranged as $ \lambda^k \in (I+\rho^kT)(\lambda^{k+1})$. Since $T$ is maximal monotone, Minty theorem \cite[Theorem 21.1, Proposition 23.7]{bauschke2017convex} ensures that the resolvent $J^k=(I+\rho^k T)^{-1}$ is everywhere defined and single-valued. Thus, the update corresponds to $\lambda^{k+1} =J^k \lambda^k$.

Let $\hat{\lambda}\in \Omega^\star$ be an arbitrary optimal solution, meaning $0 \in T(\hat{\lambda})$. By the monotonicity of $T$, we have
\begin{equation*}
    \left\langle \frac{\lambda^k-\lambda^{k+1}}{\rho^k}-0, \lambda^{k+1}-
    \hat{\lambda}\right\rangle \ge 0,
\end{equation*}
which implies $\langle \lambda^k-\lambda^{k+1}, \lambda^{k+1}-
    \hat{\lambda}\rangle \ge 0$. Thus, expanding the squared norm yields:
\begin{equation}\label{eq:14}
\begin{aligned}
    \|\lambda^k-\hat{\lambda}\|^2&=\|\lambda^k-\lambda^{k+1}+\lambda^{k+1}-\hat{\lambda}\|^2\\
    &=\|\lambda^k-\lambda^{k+1}\|^2+\|\lambda^{k+1}-\hat{\lambda}\|^2\\
    &+2\langle \lambda^k-\lambda^{k+1}, \lambda^{k+1}-\hat{\lambda}\rangle\\
    &\ge \|\lambda^k-\lambda^{k+1}\|^2+\|\lambda^{k+1}-\hat{\lambda}\|^2.
\end{aligned}
\end{equation}
Rearranging the terms, we obtain $\|\lambda^{k+1}-\hat{\lambda}\|^2\le\|\lambda^k-\hat{\lambda}\|^2-\|\lambda^k-\lambda^{k+1}\|^2$.
This establishes that the sequence $\{\|\lambda^k-\hat{\lambda}\|^2\}$ is monotonically decreasing. Being bounded below by zero, it must converge, which in turn guarantees that $\{\lambda^k\}$ is bounded. Taking the sum of Eq. \eqref{eq:14} over $k$, we have
\begin{equation*}
    \sum_{k=0}^{\infty}\|\lambda^k-\lambda^{k+1}\|^2\le \|\lambda^0-\hat{\lambda}\|^2 < \infty,
\end{equation*}
which necessitates that $\lim_{k \to \infty}\|\lambda^k-\lambda^{k+1}\|^2=0$. Since $\{\lambda^k\}$ is bounded in the reflexive Hilbert space $L^2(\mathcal{X})$, the Eberlein-\v{S}mulian theorem \cite[Theorem 3.18]{brezis2011functional} guarantees the existence of a weakly convergent subsequence $\lambda^{k_j} \rightharpoonup \bar{\lambda}$. Moreover, since $\|\lambda^{k_j}-\lambda^{k_j+1}\| \to 0$, it follows that $\lambda^{k_j+1}\rightharpoonup \bar{\lambda}$ as well. Let $y^{k+1}=\frac{\lambda^k-\lambda^{k+1}}{\rho^k}$. With the penalty parameter $\rho^k$ bounded away from zero, the convergence $\|\lambda^k-\lambda^{k+1}\|\to 0$ ensures that $y^{k_j+1} \rightarrow 0$.
 
 We now assert that $0\in T(\bar{\lambda})$. Since $T$ is maximal monotone, any $(\lambda, y) \in \text{Graph}(T)$ satisfies
 \begin{equation*}
     \langle y-y^{k_j+1},\lambda-\lambda^{k_j+1} \rangle \ge 0.
 \end{equation*}
 Because $y^{k_j+1} \to 0$ and $\lambda^{k_j+1} \rightharpoonup \bar{\lambda}$, the inner product is continuous with respect to this mixed convergence. Taking the limit as $j\rightarrow\infty$ yields:
 \begin{equation}\label{eq:16}
 \langle y,\lambda-\bar{\lambda} \rangle\ge 0.
 \end{equation}
 If $0\notin T(\bar{\lambda})$, we could construct a new operator $T'$ with $\text{Graph}(T')=\text{Graph}(T)\cup\{(\bar{\lambda},0)\}$. From Eq. \eqref{eq:16}, $T'$ is also monotone and $\text{Graph}(T) \subsetneq \text{Graph}(T')$, which is a contradiction with the maximality of $T$. Therefore, $0\in T(\bar{\lambda})$, meaning $\bar{\lambda} \in \Omega^\star$.

 In summary, for any $\hat{\lambda} \in \Omega^\star$, $\{\|\lambda^k-\hat{\lambda}\|\}$ is monotonically decreasing with a lower bound, and every weak sequential cluster point of  $\{\lambda^k\}$ belongs to $\Omega^\star$. By Opial lemma \cite[Theorem 5.5]{bauschke2020correction}, $\{\lambda^k\}$ weakly converges to a point in $\Omega^\star$. This concludes the proof.
\end{proof}

The next theorem establishes the convergence of the policy sequence $\{\pi^k\}$ and guarantees that any of its weak sequential cluster points is optimal for the original constrained problem.
\begin{theorem}
 Any weak sequential cluster point of the policy sequence $\{\pi^{k}\}$ is a globally optimal policy of problem \eqref{eq:2}.
\end{theorem}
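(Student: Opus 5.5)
Let $\bar\pi$ be a weak sequential cluster point, say $\pi^{k_j}\rightharpoonup\bar\pi$ in $L^2(\mathcal{X})$; such points exist because $\Pi$ is uniformly bounded in $L^2(\mathcal{X})$. The plan is to show that $\bar\pi$ is feasible for \eqref{eq:2} and that $J(\bar\pi)$ attains the dual optimal value. Optimality of $\bar\pi$ then follows by weak duality.

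The key identity comes from the proof of Lemma~\ref{lemma:2}. Eq.~\eqref{eq:10} gives
\begin{equation*}
d(\lambda^{k})=-J(\pi^{k})+\langle\lambda^{k},F^{\pi^{k}}\rangle .
\end{equation*}
After finitely many iterations $\rho^k=\bar\rho$ is fixed. In that regime the proof of Theorem~\ref{thm:1} shows $\|\lambda^{k+1}-\lambda^k\|\to 0$, and $\lambda^k\rightharpoonup\lambda^\star\in\Omega^\star$ with $\{\lambda^k\}$ bounded.

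\textbf{Feasibility.} From $\lambda^{k+1}=\max\{0,\lambda^k+\bar\rho F^{\pi^{k+1}}\}$ we get pointwise
\begin{equation*}
\max\{F^{\pi^{k+1}},-\lambda^k/\bar\rho\}=(\lambda^{k+1}-\lambda^k)/\bar\rho .
\end{equation*}
Hence $\|\max\{F^{\pi^{k+1}},0\}\|\le\|\lambda^{k+1}-\lambda^k\|/\bar\rho\to0$, because $\max\{F,0\}\le\max\{F,-\lambda/\bar\rho\}$ in absolute value whenever the positive part is active. So the positive part of $F^{\pi^{k}}$ tends to zero in $L^2$. I then pass to the limit along $k_j$. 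By Assumption 3 each $F^\pi(x)$ is convex in $\pi$, and by Assumption 2 it is strongly continuous. Convex plus strongly continuous gives weak lower semicontinuity, in the integrated sense: for any nonnegative test function $\phi$, the map $\pi\mapsto\langle\phi,F^\pi\rangle$ is convex and continuous, hence weakly l.s.c. Taking $\phi\ge0$ gives
\begin{equation*}
\langle\phi,F^{\bar\pi}\rangle\le\liminf_j\langle\phi,F^{\pi^{k_j}}\rangle\le\liminf_j\langle\phi,\max\{F^{\pi^{k_j}},0\}\rangle=0,
\end{equation*}
so $F^{\bar\pi}\le0$ almost everywhere on $\mathrm{supp}(\mu)$.

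\textbf{Complementary slackness and optimality.} The same identity also gives $\langle\lambda^{k+1},F^{\pi^{k+1}}\rangle\to0$. Indeed, on $\{\lambda^{k+1}>0\}$ we have $F^{\pi^{k+1}}=(\lambda^{k+1}-\lambda^k)/\bar\rho$, so
\begin{equation*}
|\langle\lambda^{k+1},F^{\pi^{k+1}}\rangle|\le\|\lambda^{k+1}\|\,\|\lambda^{k+1}-\lambda^k\|/\bar\rho\to0 ,
\end{equation*}
using boundedness of $\{\lambda^k\}$. Therefore $d(\lambda^{k})=-J(\pi^{k})+o(1)$.

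Since $-d$ is convex and l.s.c., and the Fejér inequality shows $\lambda^{k+1}$ is a proximal step, I will show $d(\lambda^k)\to d(\lambda^\star)=\max d$. Concavity gives
\begin{equation*}
d(\hat\lambda)-d(\lambda^{k+1})\le\langle y^{k+1},\lambda^{k+1}-\hat\lambda\rangle ,\qquad y^{k+1}=(\lambda^k-\lambda^{k+1})/\bar\rho\in T(\lambda^{k+1}).
\end{equation*}
The right-hand side tends to $0$ because $y^{k+1}\to0$ strongly and $\{\lambda^{k+1}-\hat\lambda\}$ is bounded.

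Then $-J(\bar\pi)\le\liminf_j -J(\pi^{k_j})=\max d$, by weak l.s.c. of the convex, continuous functional $-J$. For any feasible $\pi$, weak duality gives $\max d\le -J(\pi)$ since $\langle\lambda,F^\pi\rangle\le0$. Combined with the feasibility of $\bar\pi$, this yields $J(\bar\pi)\ge J(\pi)$ for every feasible $\pi$, so $\bar\pi$ is globally optimal for \eqref{eq:2}.

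\textbf{Main obstacle.} The delicate step is the weak-limit passage: extracting weak lower semicontinuity of $\pi\mapsto\langle\phi,F^\pi\rangle$ and of $-J$ from Assumptions 2–3 (Mazur's lemma applied to convex, strongly continuous functionals), and interpreting "convex in $\pi$" so that it is compatible with the $L^2$ weak topology. The rest is bookkeeping with the dual update identity.
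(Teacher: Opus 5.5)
Your proof is correct, but it takes a different route from the paper.

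\textbf{The paper's route.} It shows that $(\bar\pi,\lambda^\star)$ is a KKT pair. First it passes to the limit in $\frac{\lambda^{k_j-1}-\lambda^{k_j}}{\rho^{k_j-1}}+F^{\pi^{k_j}}\in N_{H_+}(\lambda^{k_j})$, using maximal monotonicity of the normal cone, to get $F^{\bar\pi}\in N_{H_+}(\lambda^\star)$. This gives feasibility and complementary slackness at once. It then passes to the limit in $L(\pi^{k_j},\lambda^{k_j-1},\rho^{k_j-1})\le L(\pi,\lambda^{k_j-1},\rho^{k_j-1})$ to obtain $\bar\pi\in\argmin_\pi L(\pi,\lambda^\star,\bar\rho)$. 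Finally it invokes sufficiency of KKT conditions under convexity.

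\textbf{Your route.} You get feasibility from the quantitative bound $\|\max\{F^{\pi^{k+1}},0\}\|\le\|\lambda^{k+1}-\lambda^k\|/\bar\rho$ together with weak lower semicontinuity. You get optimality from a zero-duality-gap argument: $-J(\pi^k)=d(\lambda^k)+o(1)\to\max d$, then weak l.s.c.\ of $-J$, then weak duality.

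\textbf{What each route buys.} Your argument needs weaker hypotheses. The paper's first limit requires $F^{\pi^{k_j}}\to F^{\bar\pi}$ in norm from only weak convergence of $\pi^{k_j}$. Its second step takes a joint limit of the augmented Lagrangian in which $\lambda^{k_j-1}$ converges only weakly, yet enters through squared norms. You need only norm continuity plus convexity (via Mazur). On the dual side you use only boundedness of $\{\lambda^k\}$, $\|\lambda^{k+1}-\lambda^k\|\to0$, and $\Omega^\star\neq\emptyset$; you never actually need $\lambda^k\rightharpoonup\lambda^\star$. The paper's route has the merit of exhibiting $\lambda^\star$ explicitly as a multiplier for $\bar\pi$. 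You recover this in one line: $-J(\bar\pi)=d(\lambda^\star)\le L(\bar\pi,\lambda^\star)\le -J(\bar\pi)$ forces $\langle\lambda^\star,F^{\bar\pi}\rangle=0$ and $\bar\pi\in\argmin_\pi L(\pi,\lambda^\star)$.

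\textbf{One step to add.} All your weak l.s.c.\ arguments are evaluated at $\bar\pi$, and $\bar\pi$ must be a policy for the statement to make sense. So you must first show $\bar\pi\in\Pi$. The paper does this by proving $\Pi$ is convex and norm-closed (an a.e.\ convergent subsequence plus closedness of $\mathcal{A}$), hence weakly closed by Mazur's lemma.
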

\begin{proof}
Because the policy space $\Pi$ is a bounded subset of the reflexive Hilbert space $L^2(\mathcal{X})$, the Eberlein-\v{S}mulian theorem guarantees the existence of a weakly convergent subsequence $\pi^{k_j} \rightharpoonup \bar{\pi}$.

We first verify $\bar{\pi} \in \Pi$ by proving that the policy space $\Pi= \{ \pi \in L^2(\mathcal{X}) \mid \pi(x) \in \mathcal{A} \text{ a.e.}\}$ is weakly closed. Since the action space $\mathcal{A}$ is compact and convex, $\Pi$ is inherently convex. To establish strong closedness, consider any sequence $\{\pi_n\} \subset \Pi$ satisfying $\|\pi_n - \pi\| \to 0$. By the Riesz theorem \cite[Theorem 4.9]{zeidler2013nonlinear}, there exists a subsequence converging to $\pi$ a.e. Since $\mathcal{A}$ is a closed set, this pointwise limit must also reside within $\mathcal{A}$ a.e., ensuring $\pi \in \Pi$. Thus, $\Pi$ is strongly closed. By Mazur's lemma \cite[Theorem 3.7]{zeidler2013nonlinear}, a strongly closed and convex subset of a Hilbert space is inherently weakly closed. Consequently, the weak convergence $\pi^{k_j} \rightharpoonup \bar{\pi}$ guarantees $\bar{\pi} \in \Pi$.

Next, we evaluate the KKT conditions for the limit point. Given the strong continuity of $F$ and the limit $\|\lambda^{k_j-1}-\lambda^{k_j}\| \rightarrow0$ established in Theorem \ref{thm:1}, it follows that  $\frac{\lambda^{k_j-1}-\lambda^{k_j}}{\rho^{k_j-1}}+F^{\pi^{k_j}}\rightarrow F^{\bar{\pi}}$. Recall that this sequence belongs to $N_{H_+}(\lambda^{k_j})$ and $N_{H_+}(\lambda)$ is a maximal monotone operator. Applying a maximal monotonicity argument identical to that in Theorem \ref{thm:1}, we obtain $F^{\bar{\pi}} \in N_{H_+}(\lambda^\star)$. 

Since $N_{H_+}(\lambda^\star)$ is a cone, it implies $\bar{\rho}F^{\bar{\pi}} \in N_{H_+}(\lambda^\star)$. This is equivalent to $ \lambda^\star+\bar{\rho}F^{\bar{\pi}} -\lambda^\star\in N_{H_+}(\lambda^\star)$, leading to the projection
\begin{equation*}
    \lambda^\star=P_{H_+}(\lambda^\star+\bar{\rho}F^{\bar{\pi}})=\max\{0,\lambda^\star+\bar{\rho}F^{\bar{\pi}}\}.
\end{equation*}
Therefore, for any $x\in \mathcal{X}$, if $\lambda^\star(x)>0$, it necessitates $\lambda^\star(x)=\lambda^\star(x)+\bar{\rho}F^{\bar{\pi}}(x)$, forcing $F^{\bar{\pi}}(x)=0$; conversely, if $\lambda^\star(x)=0$, it requires $\lambda^\star(x)+\bar{\rho}F^{\bar{\pi}}(x)\le0 $, ensuring $F^{\bar{\pi}}(x)\le0$. Thus, the primal feasibility $F^{\bar{\pi}}(x)\le0$, the dual feasibility $\lambda^\star(x)\ge0$, and the complementary slackness $\lambda^\star(x)F^{\bar{\pi}}(x)=0$ hold. 

Finally, we establish the optimality condition for $\bar{\pi}$. At any iteration $k_j$, we have $L(\pi^{k_j}, \lambda^{k_j-1}, \rho^{k_j-1}) \le L(\pi, \lambda^{k_j-1}, \rho^{k_j-1})$ for all $\pi \in \Pi$. Because $-J(\pi)$ and $F^\pi$ are convex and strongly continuous,  $L(\pi, \lambda, \rho)$ is convex and strongly continuous, and thus l.s.c. with respect to $\pi$ in the weak topology \cite[Corollary 3.9]{zeidler2013nonlinear}. Since the inner product operator is continuous under weak convergence, together with $F^{\pi^{k_j}} \to F^{\bar{\pi}}$ and $\lambda^{k} \rightharpoonup \lambda^{\star} \in \Omega^\star$, $L(\pi, \lambda, \rho)$ is l.s.c. with respect to the joint sequence. Therefore,
\begin{equation*}
\begin{aligned}
 &L(\bar{\pi}, \lambda^\star, \bar{\rho})\le \liminf_{j \to \infty} L(\pi^{k_j}, \lambda^{k_j-1}, \rho^{k_j-1}) \\
 \le& \lim_{j \to \infty} L(\pi, \lambda^{k_j-1}, \rho^{k_j-1}) = L(\pi, \lambda^\star, \bar{\rho}).   
\end{aligned}
\end{equation*}
This implies $\bar{\pi} \in \argmin_{\pi \in \Pi} L(\pi, \lambda^\star, \bar{\rho})$, i.e.,
\begin{equation*}
     0 \in \partial_\pi L(\bar{\pi}, \lambda^\star, \bar{\rho})
     =\partial_\pi L(\bar{\pi},\lambda^\star).
\end{equation*}

Overall, the pair $(\bar{\pi}, \lambda^\star)$ fulfills all KKT conditions. Given the convexity of problem \eqref{eq:2}, these conditions are sufficient, ensuring $\bar{\pi}$ is indeed a globally optimal feasible policy. 
\end{proof}

\section{Practical implementations}
In this section, we present the practical implementation of ALaM in the context of deep RL. Taking advantage of the enhanced policy exploration and training robustness offered by soft actor-critic (SAC) \cite{haarnoja2018soft}, we incorporate it as the foundational algorithm for reward maximization within the ALaM framework; the resulting algorithm is denoted as SAC-ALaM. 
We set the feasibility function as the cost value function. The architecture involves training several neural networks (with subscripts denoting their learnable parameters): a policy network $\pi_\theta$, two Q-networks $Q_{\phi_1}$, $Q_{\phi_2}$, two cost Q-networks $Q^c_{\psi_1}$, $Q^c_{\psi_2}$, and a multiplier network $\lambda_w$. Correspondingly, we maintain four target Q-networks parametrized by $\bar{\phi}_1, \bar{\phi}_2, \bar{\psi}_1$, and $\bar{\psi}_2$. 

To update the policy, we aim to solve the subproblem:
\begin{equation*}
\begin{aligned}
 \theta^{k+1}&=\argmin_{\theta}-J(\pi_{\theta})+\int\lambda_{w}(x)\max\left\{-\frac{\lambda_{w}(x)}{\rho},F_{\psi}(x)\right\}\,dx\\
    &+\frac{\rho}{2}\int \max\left\{-\frac{\lambda_{w}(x)}{\rho},F_{\psi}(x)\right\}^2\,dx,   
\end{aligned}
\end{equation*}
where $F_{\psi}(x)$ is the state-wise cost value of policy $\pi_{\theta}$. In the SAC framework, we replace the standard objective $J(\pi)$ with the soft Q-function, and estimate the integral using empirical expectations over sampled transitions. Combining these components, the policy loss is constructed as:
\begin{equation}
\begin{split}
    L_{\pi}(\theta)=
    &\mathbb{E}_{x \sim \mathcal{B},a\sim \pi_\theta}\left [\alpha \log\pi_\theta(a|x)-\min_{i\in \{1,2\}}Q_{\phi_i}(x,a)\right ]\\
    +&\mathbb{E}_{x \sim \mathcal{B}}\left [\lambda_{w}(x)\max \left\{-\frac{\lambda_{w}(x)}{\rho},\hat{F}_{\psi}(x)\right\}\right]\\
    +&\mathbb{E}_{x \sim \mathcal{B}}\left[\frac{\rho}{2}\max\left\{-\frac{\lambda_{w}(x)}{\rho},\hat{F}_{\psi}(x)\right\}^2
    \right],
\end{split}
\end{equation}
where $\alpha$ denotes the temperature. To prevent risk underestimation, the cost value estimation relies on the clipped double Q-learning trick \cite{fujimoto2018addressing}. In practice, we compute $\hat{F}_{\psi}(x)$ using Monte Carlo sampling with $K$ actions drawn from the current policy:
\begin{equation*}
    \hat{F}_{\psi}(x)=\frac{1}{K}\sum_{k=1}^{K} \max_{i \in \{1,2\}} Q^c_{\psi_i}(x,a_k), \quad \text{where} \ a_k \sim \pi_\theta(\cdot|x) .
\end{equation*}

The loss functions for the critic networks are the same as those in SAC:
\begin{equation*}
\begin{split}
    L_{Q}(\phi_i)&=\mathbb{E}_{(x, a, r, x') \sim \mathcal{B}}\left[ (y_{\phi}-Q_{\phi_i}(x,a))^2\right],\\[6pt]
    \ y_\phi &=r(x,a)+\gamma\left(\min_{i \in\{1,2\}}Q_{\bar{\phi}_i}(x',a')-\alpha \log\pi_\theta(a'|x')\right),\\[6pt]
    L_{Q^c}(\psi_i)&=\mathbb{E}_{(x, a, c, x') \sim \mathcal{B}}\left[ (y_{\psi}-Q^c_{\psi_i}(x,a))^2\right],\\[6pt]
    \ y_\psi& =c(x)+\gamma\max_{i \in\{1,2\}}Q^c_{\bar{\psi}_i}(x',a'),
\end{split}
\end{equation*}
where the target network parameters ($\bar{\phi}_i, \bar{\psi}_i$) are updated via exponential moving averages. The temperature $\alpha$ is automatically tuned in a similar manner to SAC:
\begin{equation*}
    L(\alpha)=\mathbb{E}_{x\sim \mathcal{B},a \sim \pi_\theta}[\alpha(-\log\pi_\theta(a|x)-\mathcal{H})],
\end{equation*}
where $\mathcal{H}$ is the target entropy.

The objective for updating the multiplier network aligns with Eq. \eqref{eq:8}, which can be approximated by
\begin{equation*}
    L_\lambda(w)=\mathbb{E}_{x\sim \mathcal{B}}[(\lambda_{w}(x)-\text{sg}(\max\{\lambda_w(x)+\rho \hat{F}_{\psi}(x),0\}))^2],
\end{equation*}
where $\text{sg}(\cdot)$ denotes the stop-gradient operator. To ensure a smooth and precise approximation, we perform multiple small-step gradient updates on this objective per cycle. Given that the multiplier updates rely on the policy being near convergence, we employ a two-timescale learning scheme. By introducing a delay interval $m_{\lambda}$, the multiplier network is updated only every $m_{\lambda}$ policy steps. Following each dual update, we evaluate the average constraint violation metric
\begin{equation*}
    v=\sqrt{\mathbb{E}_{x\sim \mathcal{B}}\left [\max \left\{\hat{F}_{\psi}(x),-\frac{\lambda_w(x)}{\rho}\right \}^2\right]}.
\end{equation*}
If $v>\frac{1}{\rho}$, we increase the penalty factor via $\rho \leftarrow \min\{ \sigma \rho, \rho_{\max}\}$; otherwise, $\rho$ remains unchanged. The pseudocode for SAC-ALaM is summarized in Algorithm \ref{alg}.

\begin{algorithm}[!t]
	\caption{SAC-ALaM}\label{alg}
	\begin{algorithmic}[1]
		\State \textbf{Input} $\theta,\phi_{1,2}, \psi_{1,2}, \bar{\phi}_{1,2}, \bar{\psi}_{1,2},w$.
        \For{each iteration}
            \For{each environment step}
            \State Sample $a_t \sim \pi_\theta(\cdot|x_t)$, $x_{t+1}= f(x_t, a_t)$
            \State $\mathcal{B} \leftarrow \mathcal{B} \ \cup \left\{(x_t,a_t,x_{t+1},r_t,c_t)\right\}$ 
            \EndFor
            \For{each gradient step}
            \State $\phi_i \leftarrow \phi_i-\beta_\phi\nabla L_Q(\phi_i), i=1,2$
            \State $\psi_i \leftarrow \psi_i-\beta_\psi\nabla L_{Q^c}(\psi_i), i=1,2$
            \State $\theta \leftarrow \theta -\beta_\theta \nabla L_\pi(\theta)$
            \State $\alpha \leftarrow \alpha - \beta_{\alpha} \nabla L(\alpha)$
            \If {step $\bmod\ m_{\lambda} == 0$}
                \For {each dual gradient step}
                \State $w \leftarrow w-\beta_w\nabla L_{\lambda}(w)$
                \EndFor
                \If {$v >1/\rho$}
                    \State $\rho \leftarrow \min \{\sigma \rho, \rho_{\max}\}$
                \EndIf
            \EndIf
            \State $\bar{\phi}_i \leftarrow \tau \phi_i+(1-\tau)\bar{\phi}_i,i=1,2$
            \State $\bar{\psi}_i \leftarrow \tau \psi_i+(1-\tau)\bar{\psi}_i,i=1,2$
        \EndFor
    \EndFor
    \State \textbf{Output} $\theta,\phi_{1,2}, \psi_{1,2}, \bar{\phi}_{1,2}, \bar{\psi}_{1,2},w$.
\end{algorithmic}
\end{algorithm}

\section{Experiments}
In this section, we empirically evaluate SAC-ALaM to demonstrate its efficacy and competitive advantage in safe RL. Our evaluation highlights three core strengths:

\begin{itemize}
    \item SAC-ALaM achieves high returns while satisfying safety constraints on standard safe RL benchmark, outperforming state-of-the-art baselines.

    \item SAC-ALaM effectively mitigates training oscillations, enabling stable convergence to the optimal feasible policies.

    \item  The multiplier network learns a well-calibrated risk representation, which provides interpretable safety feedback and demonstrates robust generalization in novel situations.
\end{itemize}

As illustrated in Figure \ref{fig:snapshot}, we evaluate ALaM across a diverse suite of environments from the Safety-Gymnasium benchmark \cite{ji2023safety}. We test multiple agents, ranging from Point and Car featuring low-dimensional state spaces for 2D kinematics and steering, to Swimmer, HalfCheetah, and Ant that introduce high-dimensional continuous control, demanding the agent to simultaneously achieve complex joint coordination, dynamic balance, and hazard avoidance. These agents are assigned to perform various tasks: moving rapidly without exceeding speed limits (Velocity), navigating to the green target while avoiding blue obstacles (Goal), moving as fast as possible along the green circular boundary without leaving the yellow bounded regions (Circle), and reaching and pressing orange buttons while evading hazards (Button). Furthermore, we examine two levels of safety difficulty (Level 1 and Level 2) for some scenarios. By combining these different agents, tasks, and constraint difficulties, we conduct a comprehensive evaluation of SAC-ALaM.

\begin{figure*}[htbp]
    \centering
    \begin{minipage}{0.22\textwidth}
        \includegraphics[width=\linewidth]
        {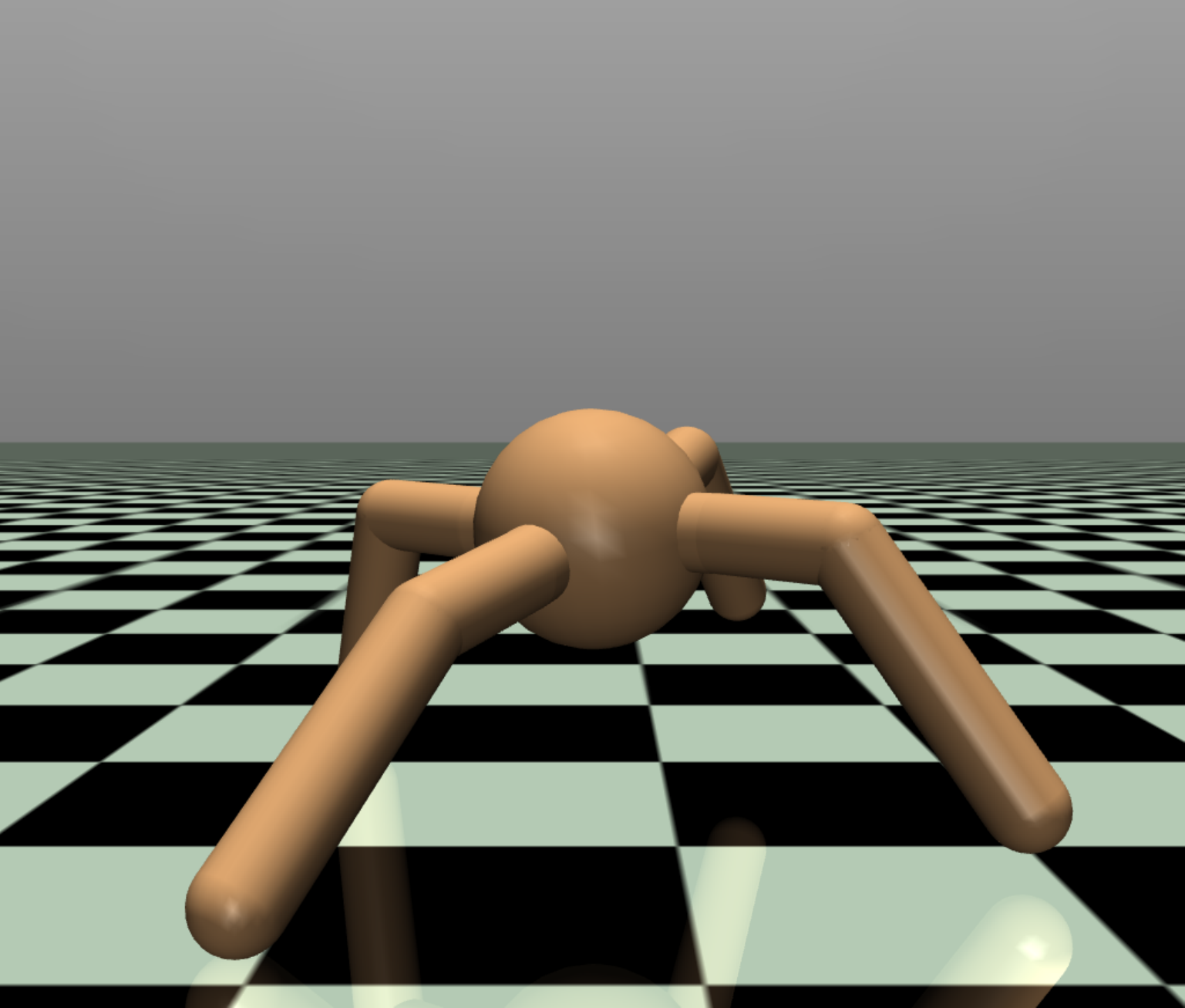}
        \caption*{(a) Velocity}
    \end{minipage}\hfill
    \begin{minipage}{0.22\textwidth}
        \includegraphics[width=\linewidth]
        {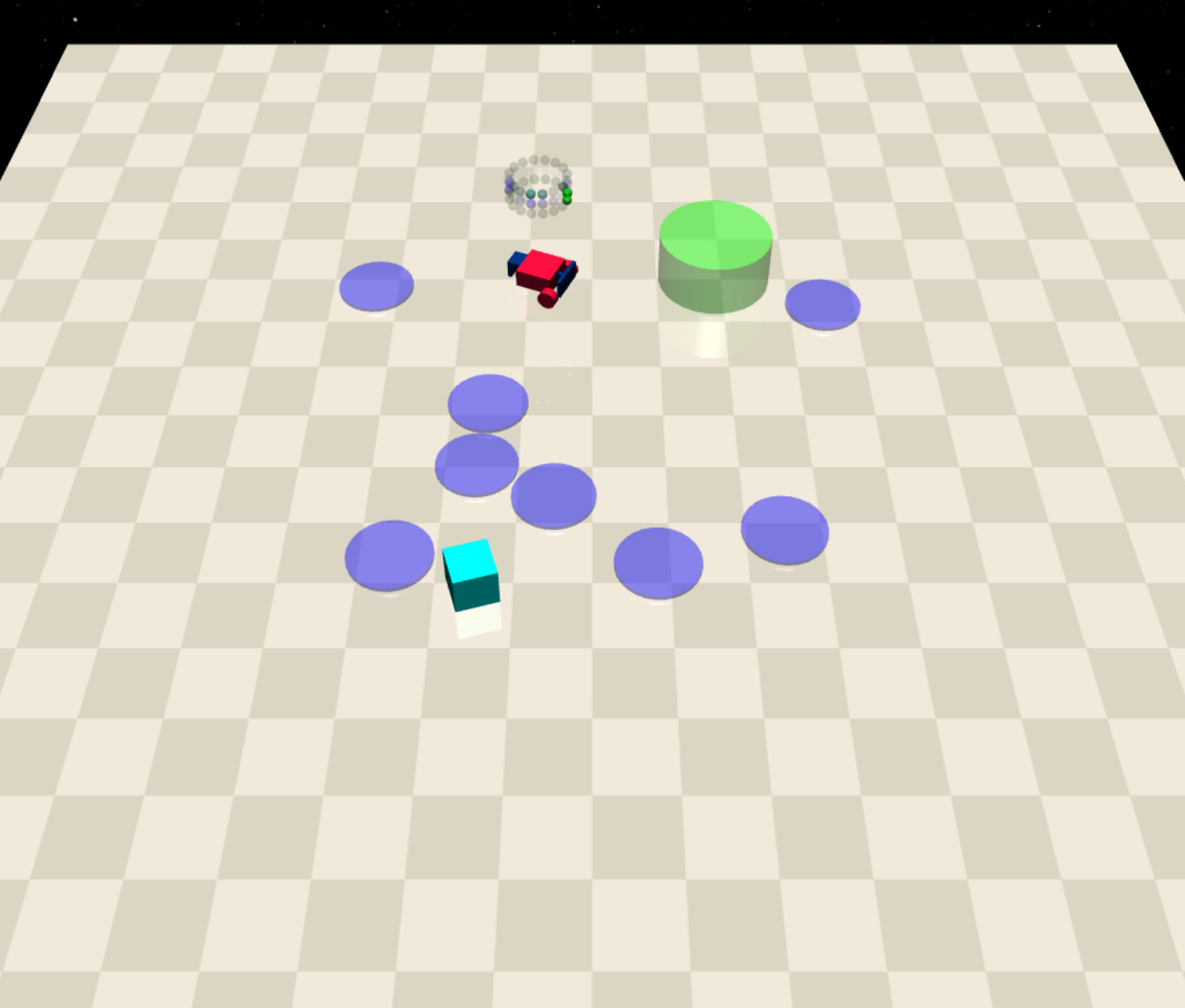}
        \caption*{(b) Goal}
    \end{minipage}\hfill
    \begin{minipage}{0.22\textwidth}
        \includegraphics[width=\linewidth]
        {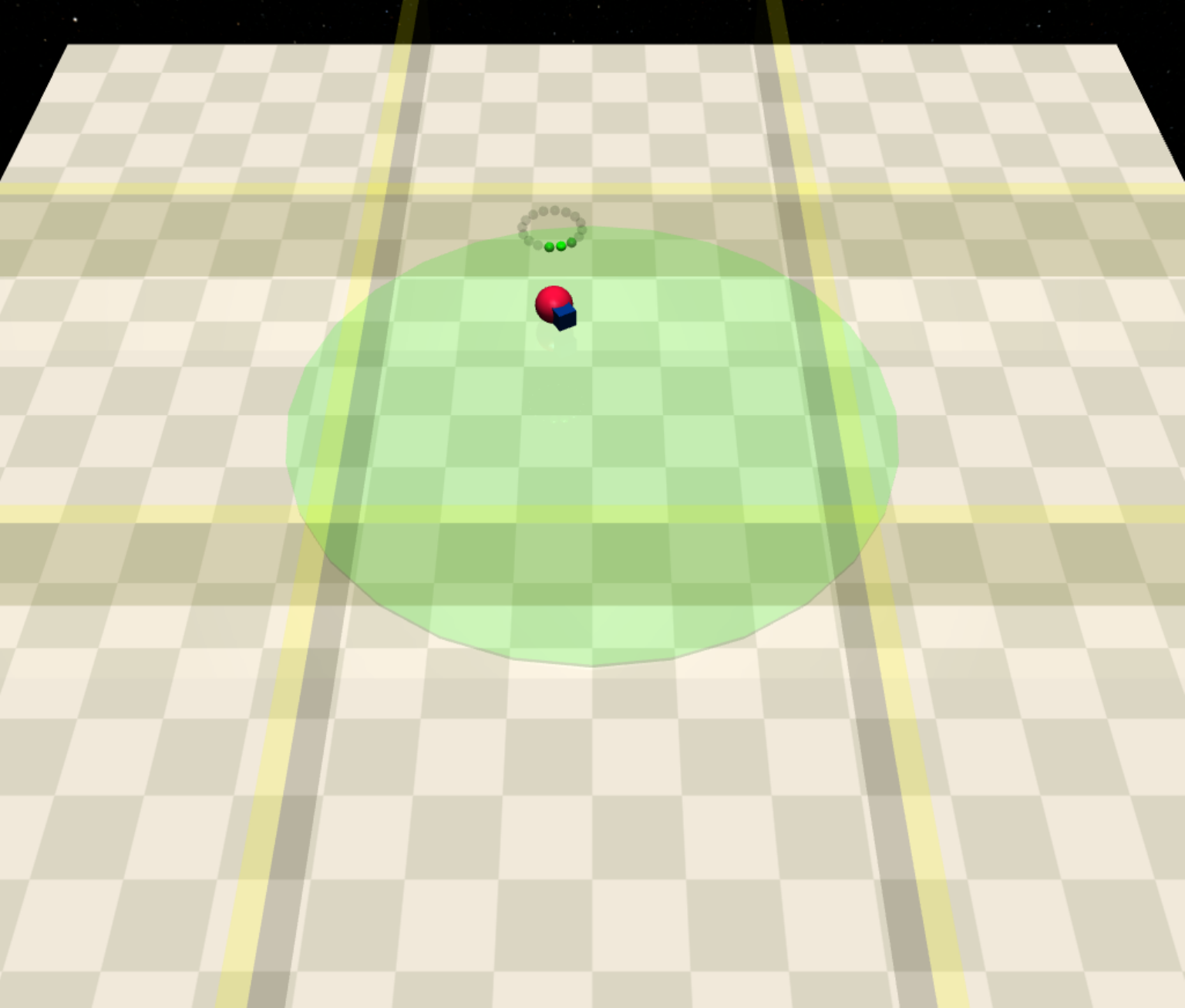}
        \caption*{(c) Circle}
    \end{minipage}\hfill
    \begin{minipage}{0.22\textwidth}
        \includegraphics[width=\linewidth]
        {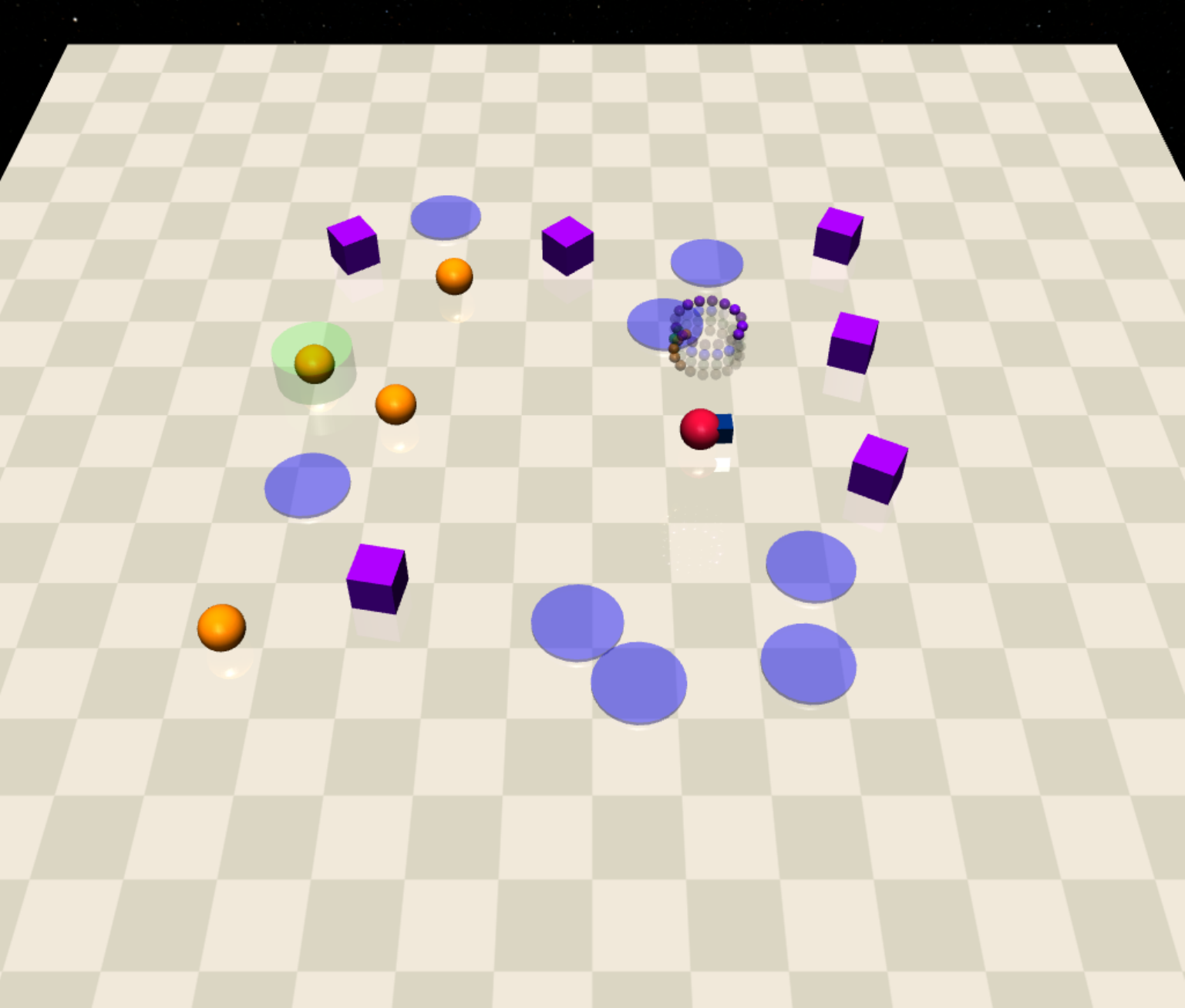}
        \caption*{(d) Button}
    \end{minipage}
\caption{Snapshots of four experimental task scenarios.}
\label{fig:snapshot}
\end{figure*}

\begin{table}[h]
\centering
\caption{Hyperparameters}
\label{tab:hyperparameters}
\renewcommand{\arraystretch}{1.2}
\setlength{\tabcolsep}{18pt} 
\begin{tabular}{lc}
\toprule
Hyperparameter & Value \\
\midrule
Optimizer & RAD \cite{lyu2025conformal}\\
Discount factor ($\gamma$) & 0.99 \\
Hidden layers & 2 \\
Hidden neurons per layer & 256 \\
Activation function & ReLU \\ 
Learning rate & $1 \times 10^{-4}$\\
Learning rate for multiplier & $1 \times 10^{-5}$\\
Total environment steps & $4 \times 10^6$ \\
Replay buffer size  & $2 \times 10^6$ \\
Batch size  & $256$ \\
Target entropy ($\mathcal{H}$) & $-\dim(\mathcal{A})$ \\
Multiplier update interval ($m_{\lambda}$) & 200 \\
Monte Carlo samples ($K$) & 5 \\ 
Dual gradient steps & 5 \\
Maximum penalty ($\rho_{\text{max}}$) & 5.0\\
Penalty scaling factor ($\sigma$) & 1.01\\
\bottomrule
\end{tabular}
\end{table}

We compare SAC-ALaM against a comprehensive set of existing safe RL algorithms. The primary baselines are Lagrangian-based methods, including the scalar multiplier approach (SAC-Lag) \cite{ray2019benchmarking} and the parameterized multiplier network (FAC \cite{ma2021feasible}, denoted as SAC-LagNet for notational consistency). To evaluate stabilization mechanisms, we include the PID Lagrangian method (SAC-PID) \cite{pmlr-v119-stooke20a} and the augmented Lagrangian method (ASAC), adapted from APPO \cite{dai2023augmented} to use SAC. By employing SAC as the shared backbone across all Lagrangian methods, we standardize the return optimization framework, isolating the performance impact of their respective safety mechanisms. Furthermore, we evaluate SAC-FPI \cite{yang2023feasible}, a dynamic programming-based approach applying region-wise policy updates, together with two leading on-policy safe RL algorithms P3O \cite{zhang2022penalized} and CRPO \cite{xu2021crpo}. To prevent overly conservative policies, we relax the strict safety constraint to $F^\pi(x)\le d$, where $d$ is a tunable tolerance threshold with a default value of 0.1. All hyperparameter settings are detailed in Table \ref{tab:hyperparameters}.

\begin{figure}[!t]
\centering
\includegraphics[width=\linewidth]{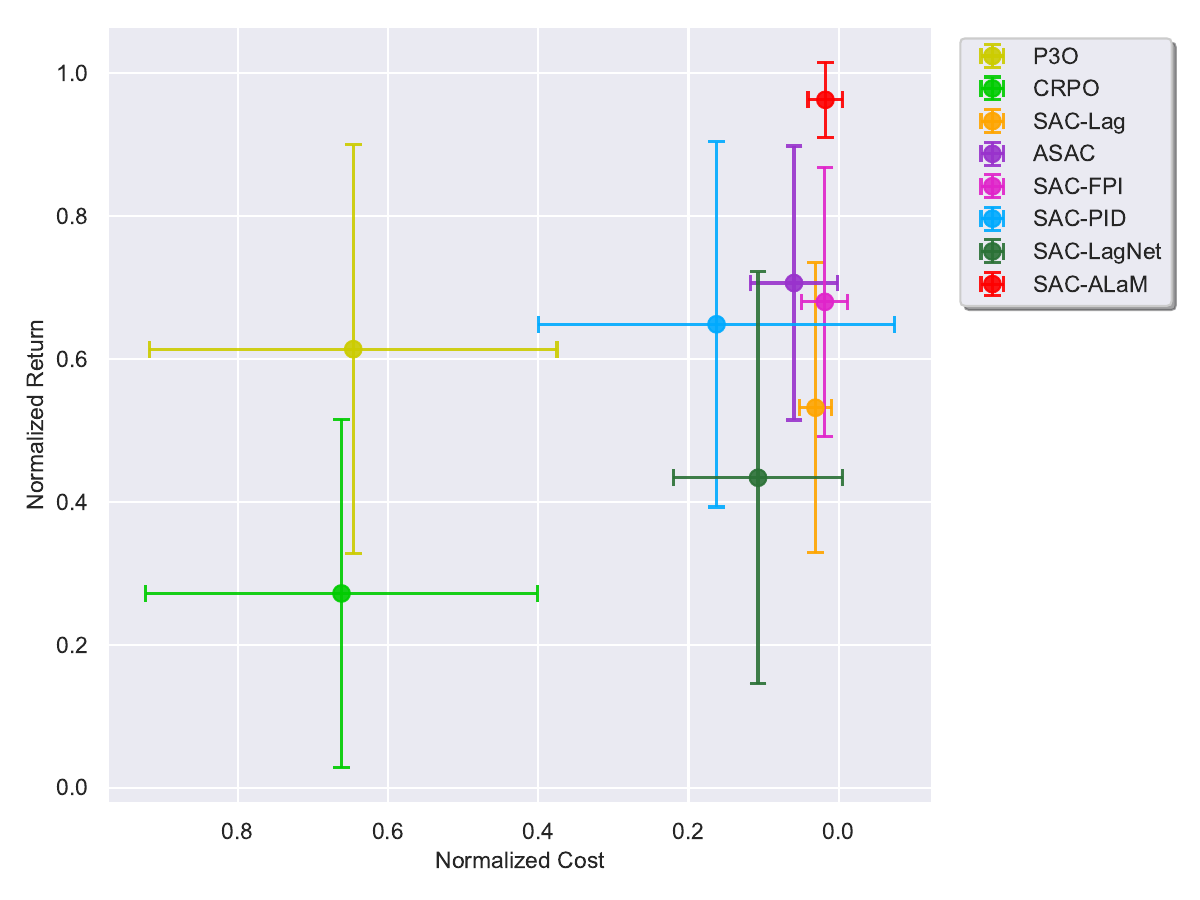}
\caption{Comparison of normalized performance across 8 tasks. The data points and error bars denote the mean and 95\% confidence intervals for episode returns and costs.}
\label{fig:aggregate_performance_profile}
\end{figure}

Figure \ref{fig:training_curves} illustrates the learning curves for episode return and episode cost across the 8 evaluated tasks. SAC-ALaM demonstrates a distinct advantage in balancing reward maximization and constraint satisfaction. In contrast, SAC-Lag and SAC-LagNet suffer from pronounced training oscillations. Although ASAC and SAC-PID partially mitigate these fluctuations, this stability often comes at the expense of final returns. Furthermore, the on-policy baselines struggle with constraint satisfaction, even in relatively simple tasks such as PointGoal1 and CarGoal1. As environmental complexity increases (e.g., in Button environments), the baselines either suffer from severe constraint violations (SAC-Lag, P3O, CRPO) or exhibit excessive conservatism, thereby compromising final performance (SAC-FPI, SAC-PID, SAC-LagNet). SAC-ALaM, however, consistently learns safe and high-performing policies regardless of task difficulty. Figure \ref{fig:aggregate_performance_profile} compares the normalized performance across all environments. To evaluate asymptotic performance, we average the final 10\% of training steps, apply per-task Min-Max normalization, and compute the cross-environment means with 95\% confidence intervals. With the x-axis inverted (where further right indicates lower costs), SAC-ALaM firmly occupies the optimal region in the top-right corner with significantly narrower confidence intervals. This confirms that SAC-ALaM not only achieves the highest overall normalized performance but also demonstrates strong consistency across diverse tasks.

To highlight SAC-ALaM's ability to mitigate the training oscillations inherent to SAC-LagNet, Figure \ref{fig:oscillation_curves} visualizes their individual training dynamics. Specifically, SAC-LagNet relies on standard Lagrangian formulation and updates its multiplier network via conventional dual gradient ascent \eqref{eq:SAC-LagNet}. In contrast, SAC-ALaM uses augmented Lagrangian with a supervised regression-based multiplier update. As observed, while SAC-LagNet suffers from severe fluctuations that ultimately degrade the final policy return, SAC-ALaM maintains a highly stable optimization process. It incurs fewer constraint violations during training and converges to a safe and effective policy.

\begin{figure*}[htbp]
    \centering

    \begin{minipage}{0.24\textwidth}
        \includegraphics[width=\linewidth]
        {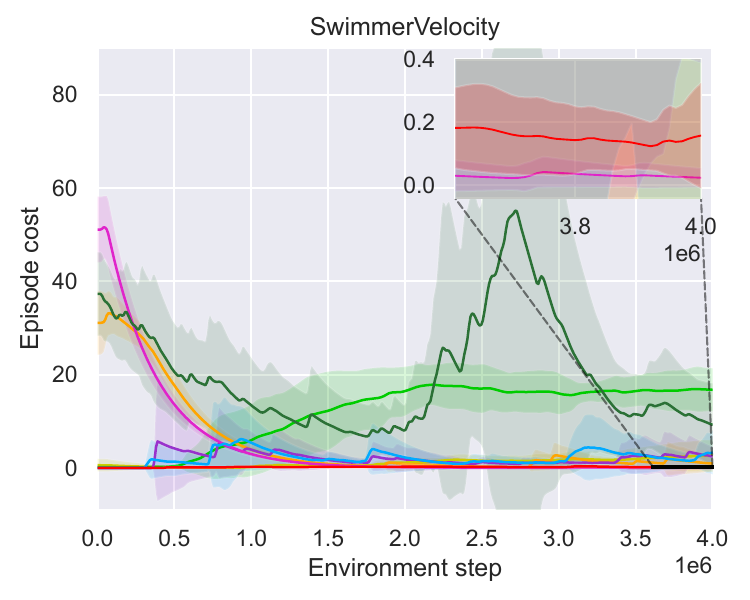}
    \end{minipage}\hfill
    \begin{minipage}{0.24\textwidth}
        \includegraphics[width=\linewidth]
        {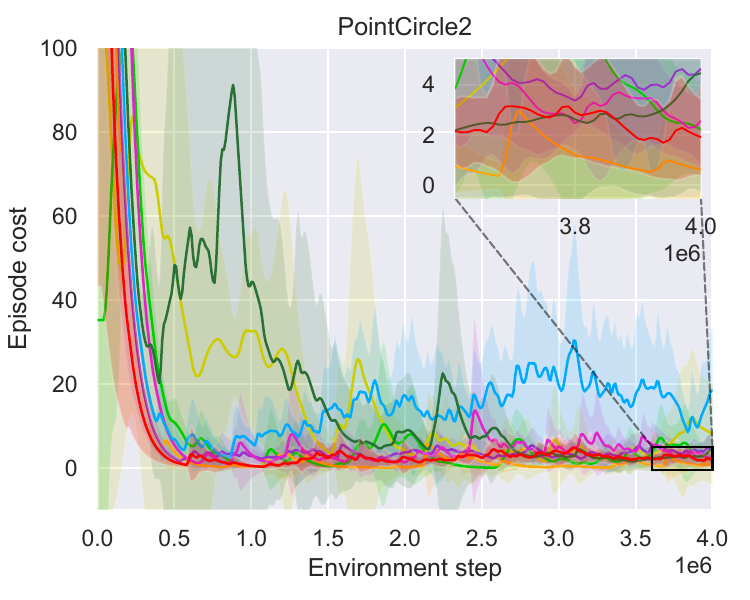}
    \end{minipage}\hfill
    \begin{minipage}{0.24\textwidth}
        \includegraphics[width=\linewidth]
        {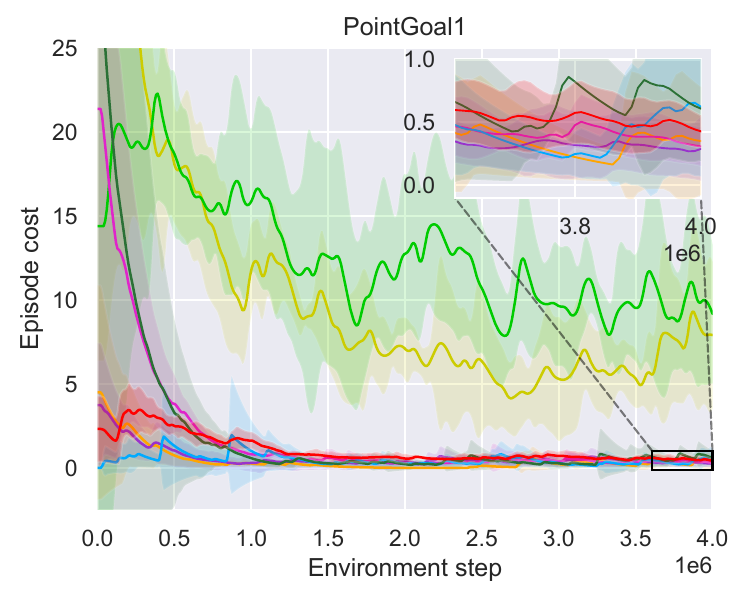}
    \end{minipage}\hfill
    \begin{minipage}{0.24\textwidth}
        \includegraphics[width=\linewidth]
        {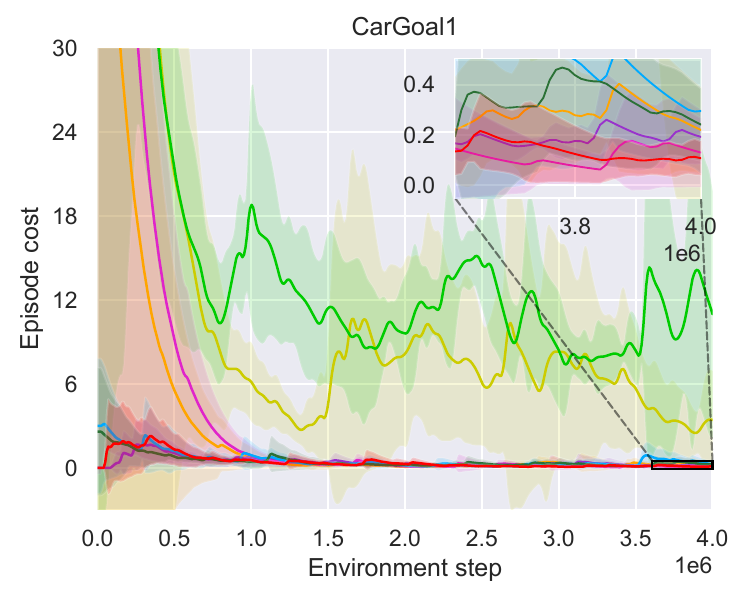}
    \end{minipage}

    \vspace{-0.0cm}
    
    \begin{minipage}{0.24\textwidth}
        \includegraphics[width=\linewidth]{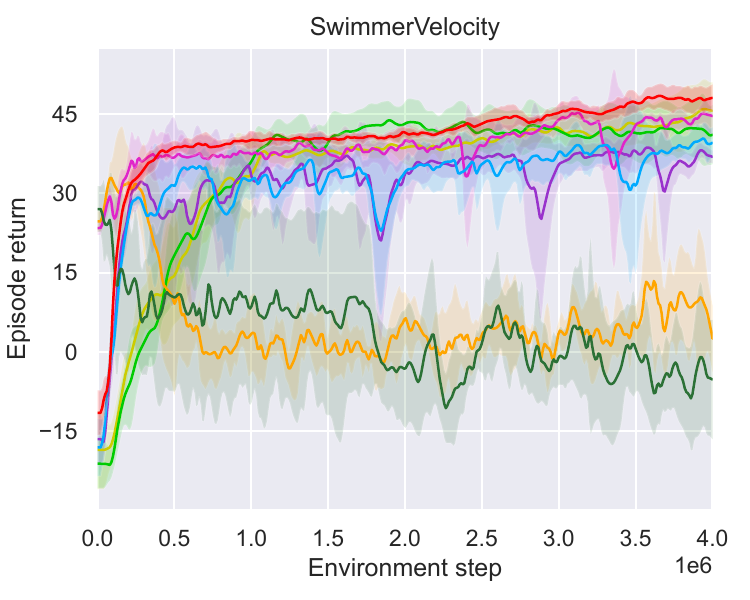}
    \end{minipage}\hfill
    \begin{minipage}{0.24\textwidth}
        \includegraphics[width=\linewidth]
        {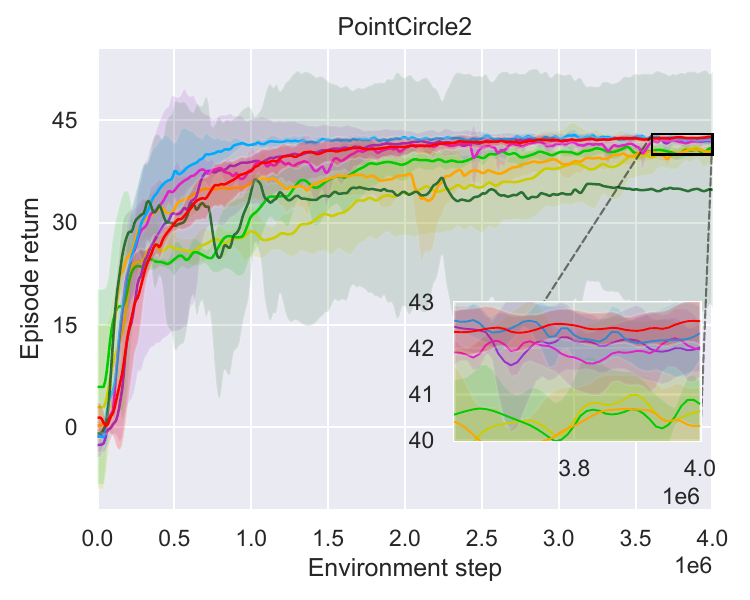}
    \end{minipage}\hfill
    \begin{minipage}{0.24\textwidth}
        \includegraphics[width=\linewidth]
        {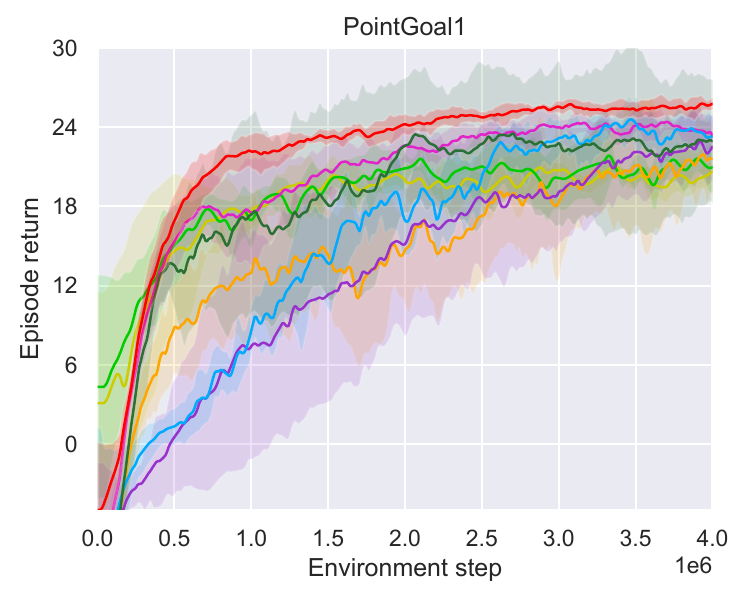}
    \end{minipage}\hfill
    \begin{minipage}{0.24\textwidth}
        \includegraphics[width=\linewidth]
        {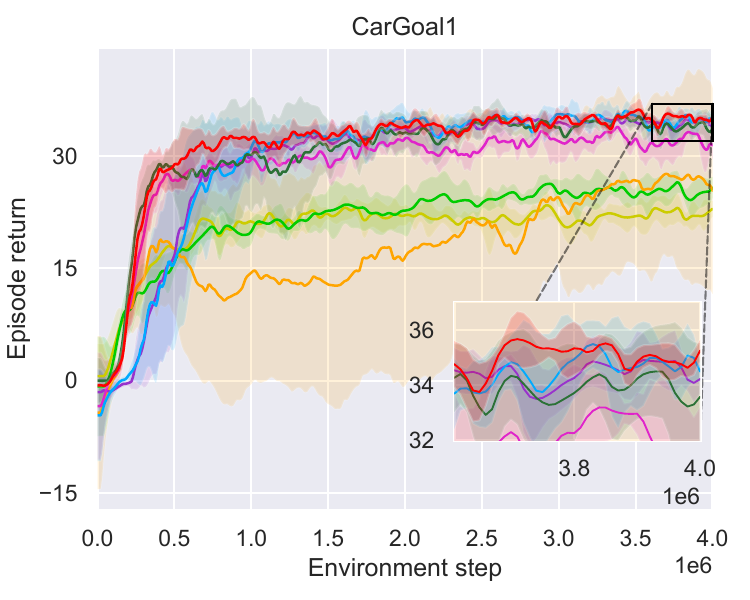}
    \end{minipage}

    \vspace{-0.0cm}
    
    \begin{minipage}{0.24\textwidth}
        \includegraphics[width=\linewidth]{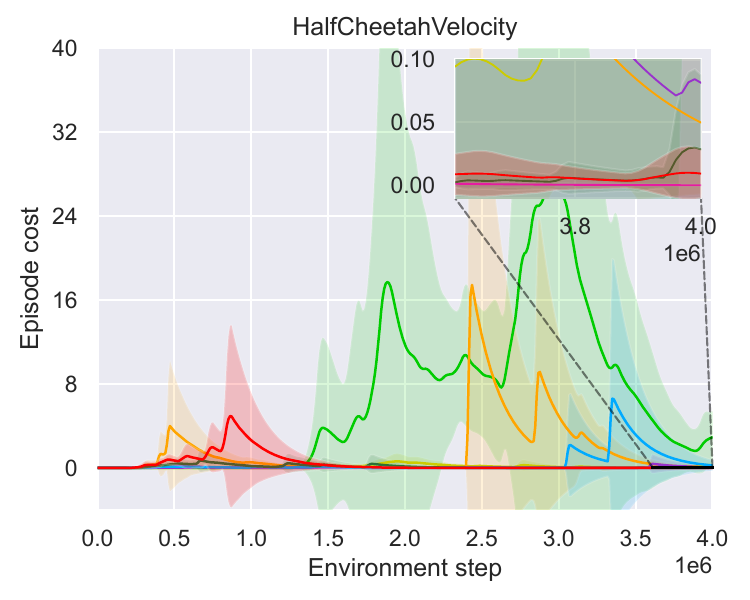}
    \end{minipage}\hfill
    \begin{minipage}{0.24\textwidth}
        \includegraphics[width=\linewidth]{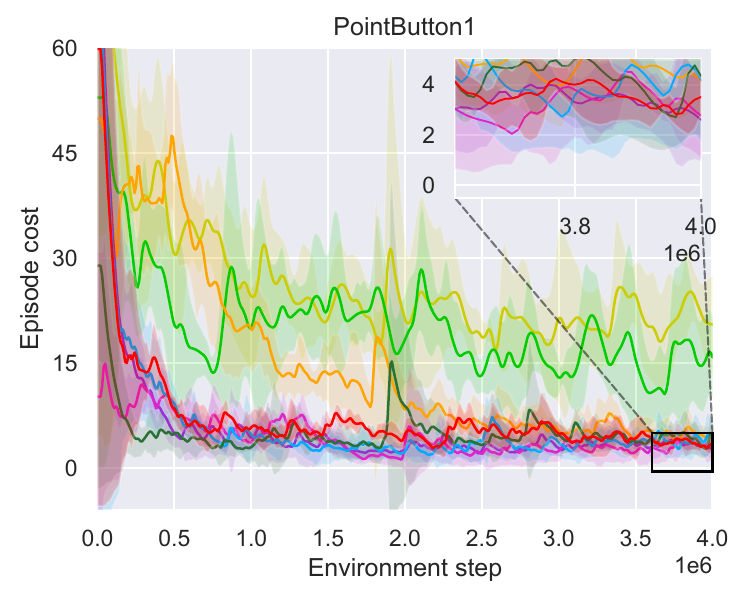}
    \end{minipage}\hfill
    \begin{minipage}{0.24\textwidth}
        \includegraphics[width=\linewidth]{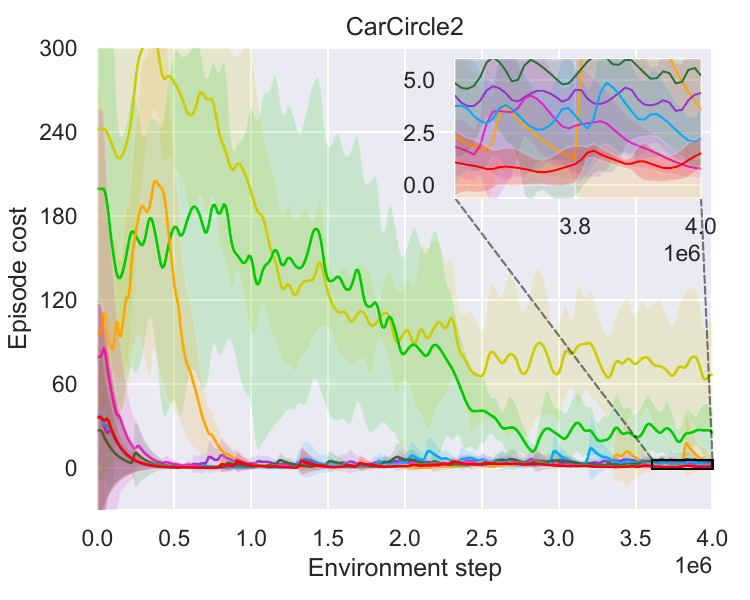}
    \end{minipage}\hfill
    \begin{minipage}{0.24\textwidth}
        \includegraphics[width=\linewidth]{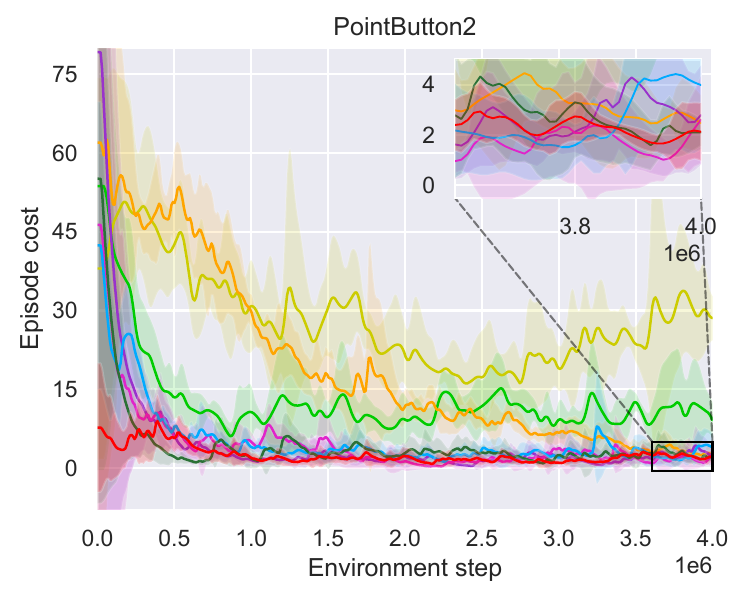}
    \end{minipage}

    \vspace{-0.0cm}
    
    \begin{minipage}{0.24\textwidth}
        \includegraphics[width=\linewidth]{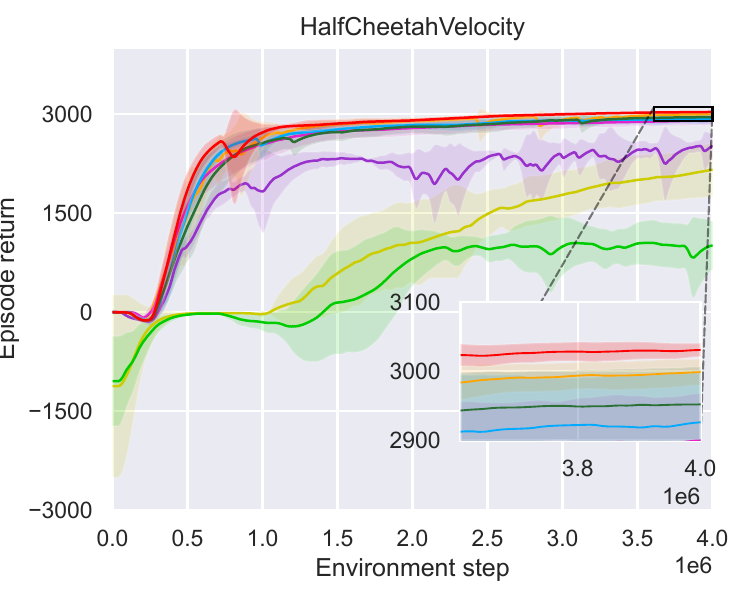}
    \end{minipage}\hfill
    \begin{minipage}{0.24\textwidth}
        \includegraphics[width=\linewidth]{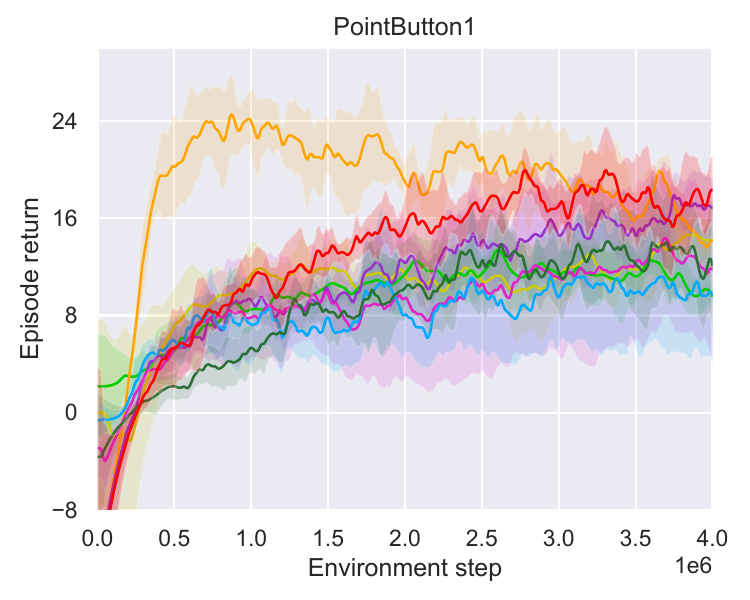}
    \end{minipage}\hfill
    \begin{minipage}{0.24\textwidth}
        \includegraphics[width=\linewidth]{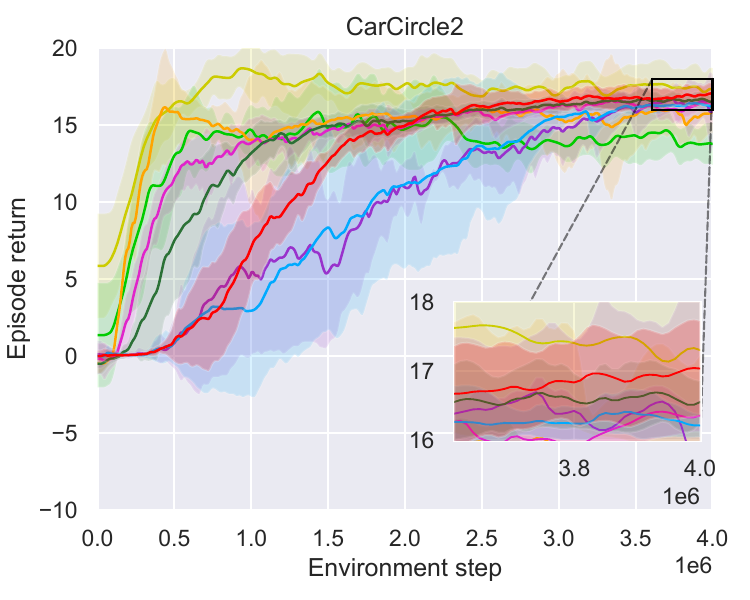}
    \end{minipage}\hfill
    \begin{minipage}{0.24\textwidth}
        \includegraphics[width=\linewidth]{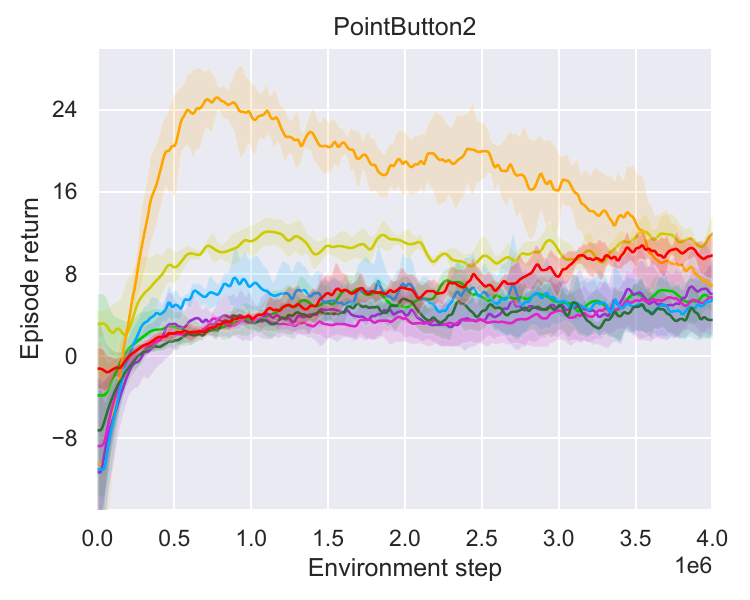}
    \end{minipage}

    \vspace{-0.0cm} 
    \includegraphics[width=0.8\textwidth]{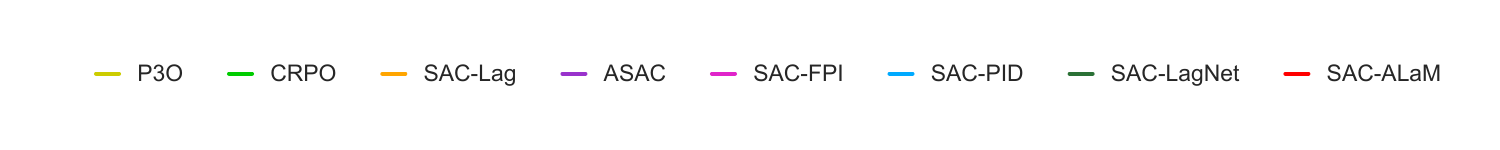} 
    \caption{Learning curves of SAC-ALaM and baselines across 8 environments. The solid lines represent the average performance over 5 seeds, and the shaded areas indicate the 95\% confidence intervals. The first row shows the average episode cost, and the second row shows the average episode return.}
    \label{fig:training_curves}
\end{figure*}

\begin{figure*}[htbp]
    \centering
    \begin{minipage}{0.24\textwidth}
        \includegraphics[width=\linewidth]{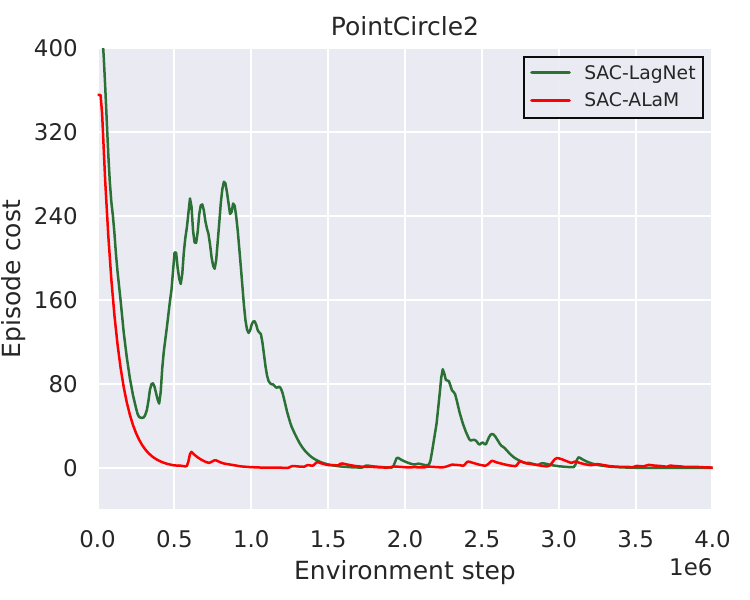}
    \end{minipage}\hfill
    \begin{minipage}{0.24\textwidth}
        \includegraphics[width=\linewidth]{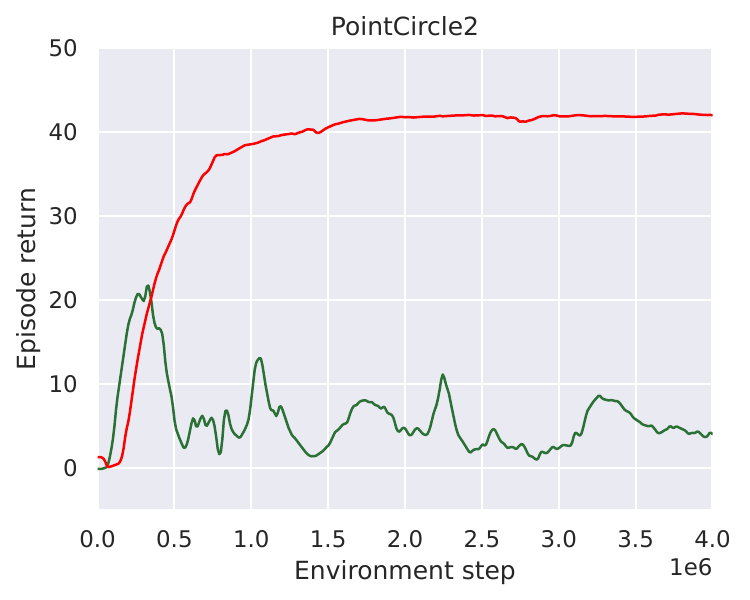}
    \end{minipage}\hfill
    \begin{minipage}{0.24\textwidth}
        \includegraphics[width=\linewidth]{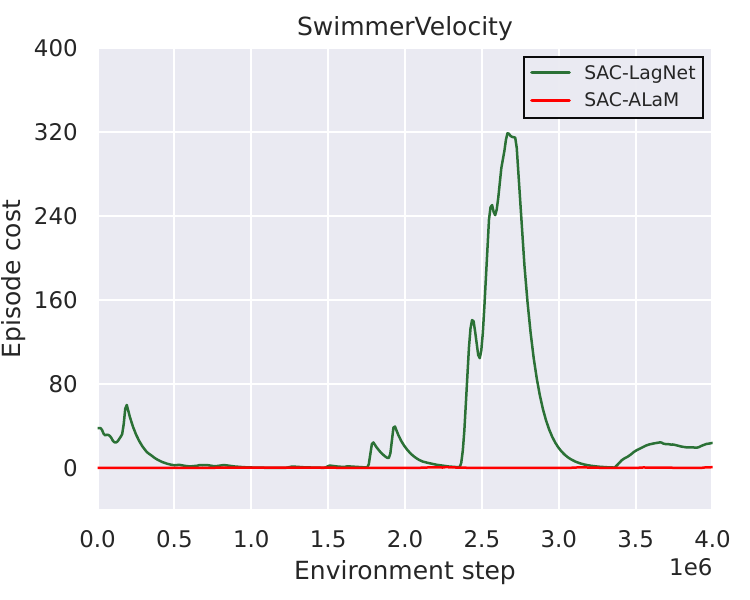}
    \end{minipage}\hfill
    \begin{minipage}{0.24\textwidth}
        \includegraphics[width=\linewidth]{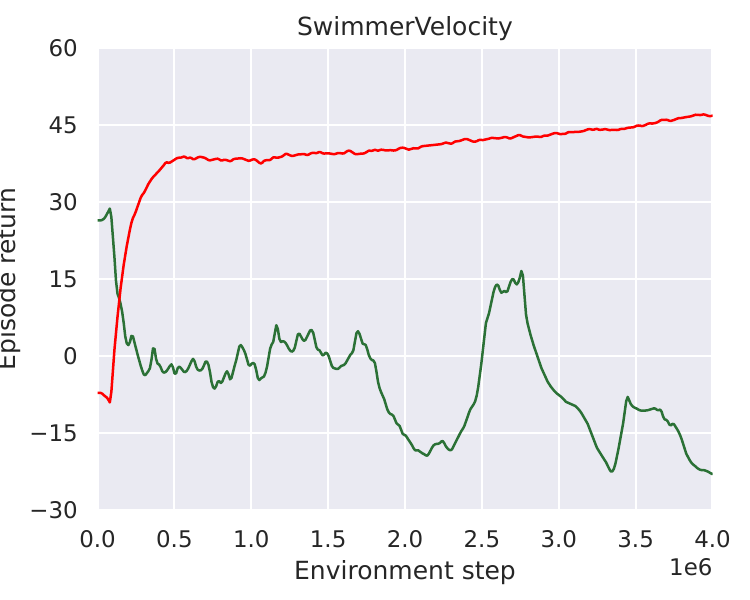}
    \end{minipage}
    \caption{Training stability comparison in the PointCircle2 and SwimmerVelocity tasks.}
    \label{fig:oscillation_curves}
\end{figure*}


To demonstrate the advantage of dual supervised regression, Figure \ref{fig:ablation_curves} presents an ablation study comparing SAC-ALaM against SAC-ALaM-GA, a variant utilizing traditional dual gradient ascent. As illustrated, SAC-ALaM-GA suffers from more training oscillations. This discrepancy arises because standard gradient ascent merely adjusts parameters based on local constraint violations, failing to approximate the theoretical dual target. This violates the theoretical convergence requirements of augmented Lagrangian methods. Furthermore, owing to the neural network's inherent generalization, improper local gradient updates inevitably corrupt the multiplier's estimations in neighboring regions, ultimately degrading the agent's overall performance. Conversely, by explicitly fitting the theoretical target, SAC-ALaM consistently outperforms its gradient ascent counterpart in both task efficacy and safety adherence.

\begin{figure*}[htbp]
    \centering
    \begin{minipage}{0.33\textwidth}
        \includegraphics[width=\linewidth]{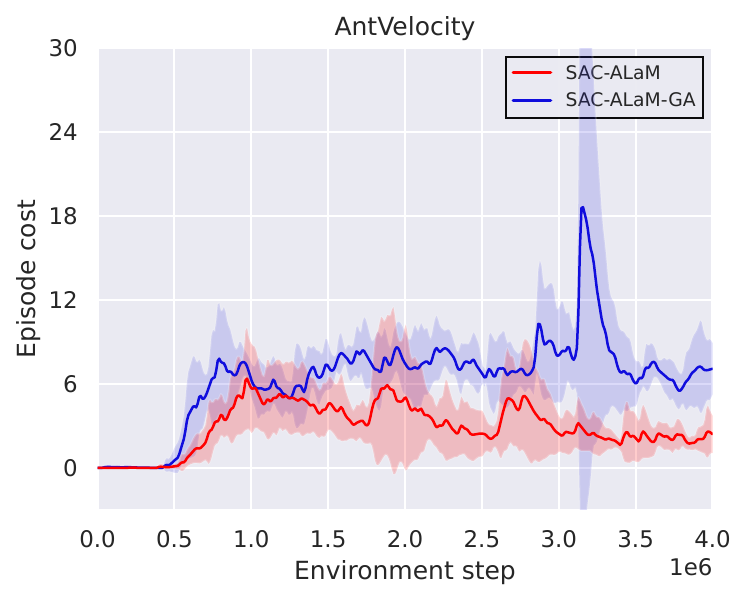}
    \end{minipage}\hfill
    \begin{minipage}{0.33\textwidth}
        \includegraphics[width=\linewidth]{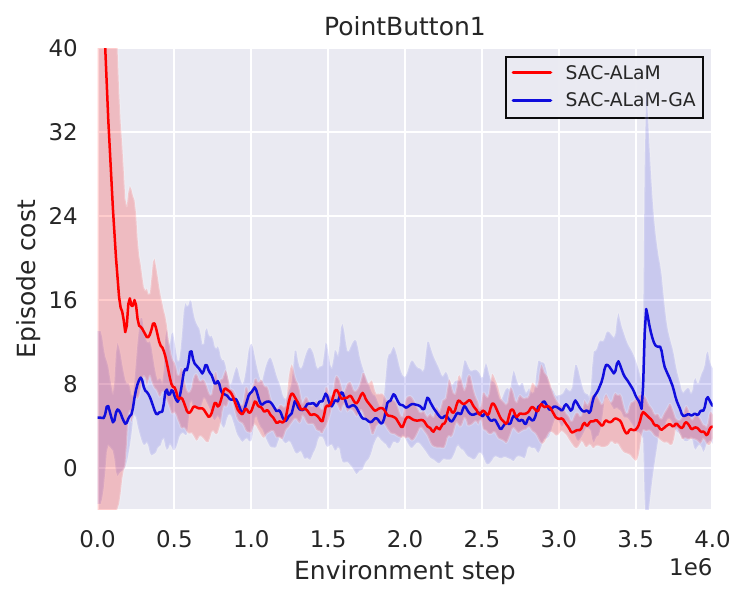}
    \end{minipage}\hfill
    \begin{minipage}{0.33\textwidth}
        \includegraphics[width=\linewidth]{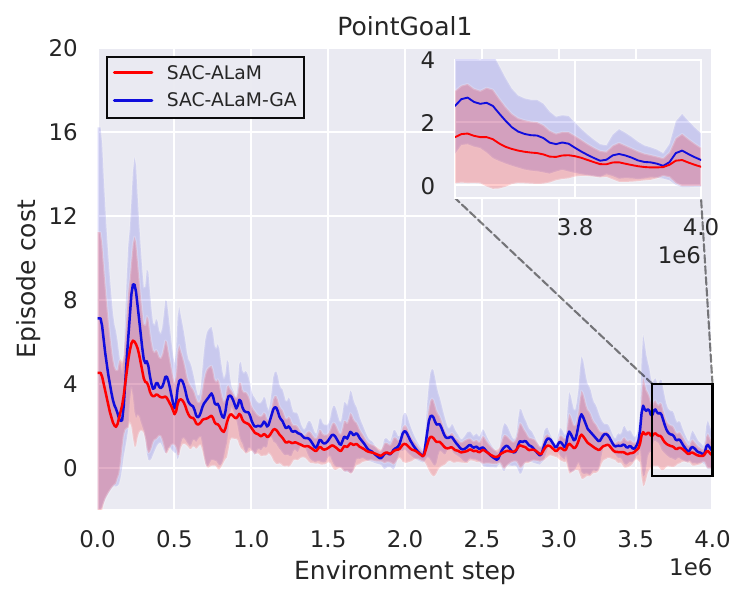}
    \end{minipage}

    \vspace{0.1cm}
    
    \begin{minipage}{0.33\textwidth}
        \includegraphics[width=\linewidth]{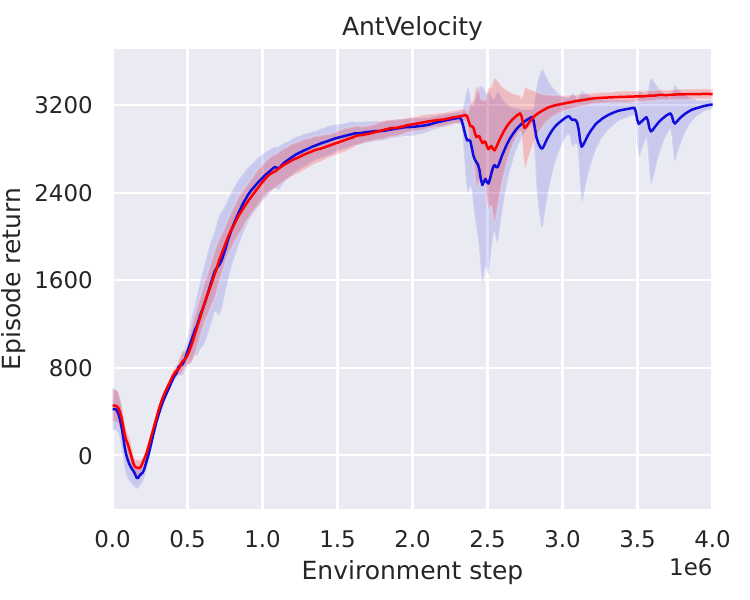}
    \end{minipage}\hfill
    \begin{minipage}{0.33\textwidth}
        \includegraphics[width=\linewidth]{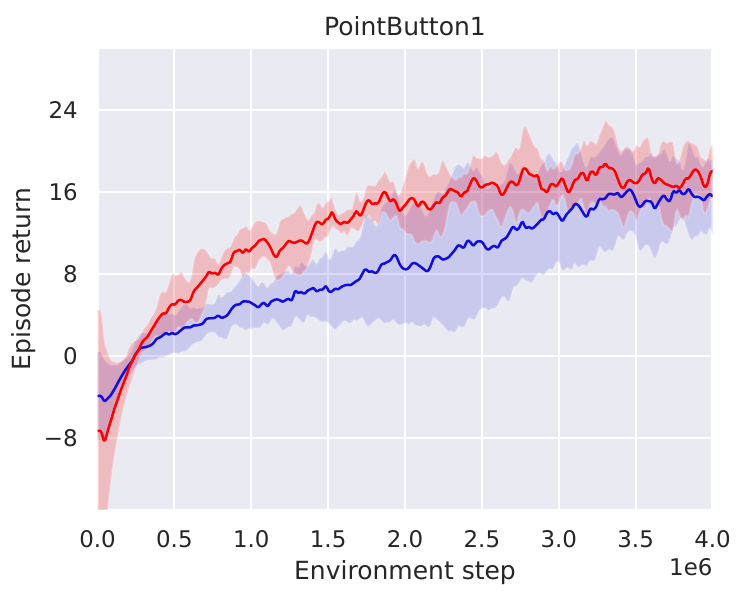}
    \end{minipage}\hfill
    \begin{minipage}{0.33\textwidth}
        \includegraphics[width=\linewidth]{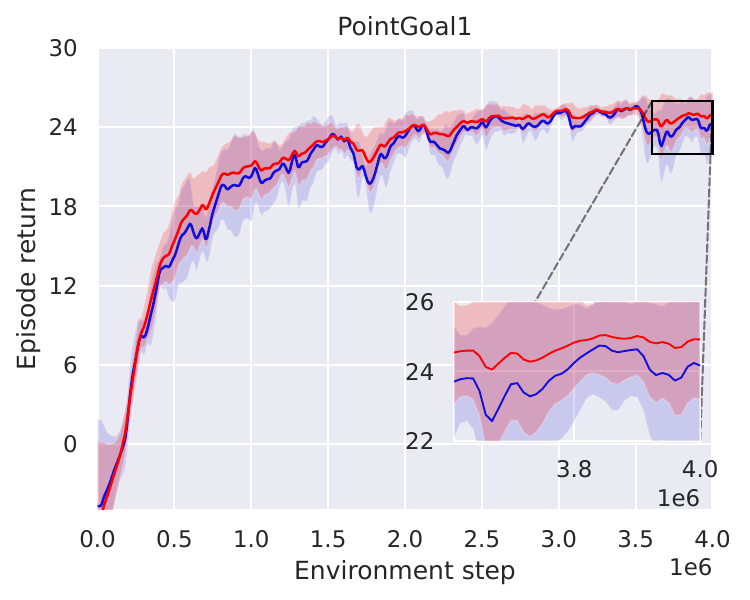}
    \end{minipage}
    
    \caption{Ablation of multiplier update. Solid lines and shaded regions denote the mean and 95\% confidence intervals across 5 seeds. The first row shows the average episode cost, and the second row shows the average episode return. }
    \label{fig:ablation_curves}
\end{figure*}

Figure \ref{fig:heatmaps} visualizes the multiplier functions to illustrate risk identification capabilities.
In the Goal task, the agent must reach the green target while avoiding blue obstacles. SAC-ALaM effectively distinguishes hazards from safe zones. As velocity increases, multipliers inside the obstacles decrease due to shorter departure time, while these values in the surrounding areas increase for extended braking distance. In contrast, SAC-LagNet struggles to clearly separate safe from dangerous regions, collapsing into a uniform high-risk estimation at high speeds. Notably, despite extreme velocities being rarely explored during training, SAC-ALaM successfully provides accurate safety feedback.
In the Circle task, the agent must navigate a green circular path within the grey boundary. SAC-ALaM's multiplier exhibits spatial symmetry when stationary, aligning with the underlying physical dynamics. Upon introducing eastward velocities, the network shifts high-risk estimations to the right-hand regions. Specifically, the high multiplier values are concentrated in the lower-right quadrant, where eastward momentum and counter-clockwise steering elevate collision probability; as the agent decelerates in the top-right area, the corresponding risk decreases. SAC-LagNet fails to capture these velocity-dependent risks. 
Overall, these visualizations confirm that the SAC-ALaM trains a physically consistent and robustly generalizable multiplier model for safety monitoring.

\begin{figure*}[htbp]
    \centering
    
    \begin{minipage}{0.48\textwidth}
        \includegraphics[width=\linewidth]{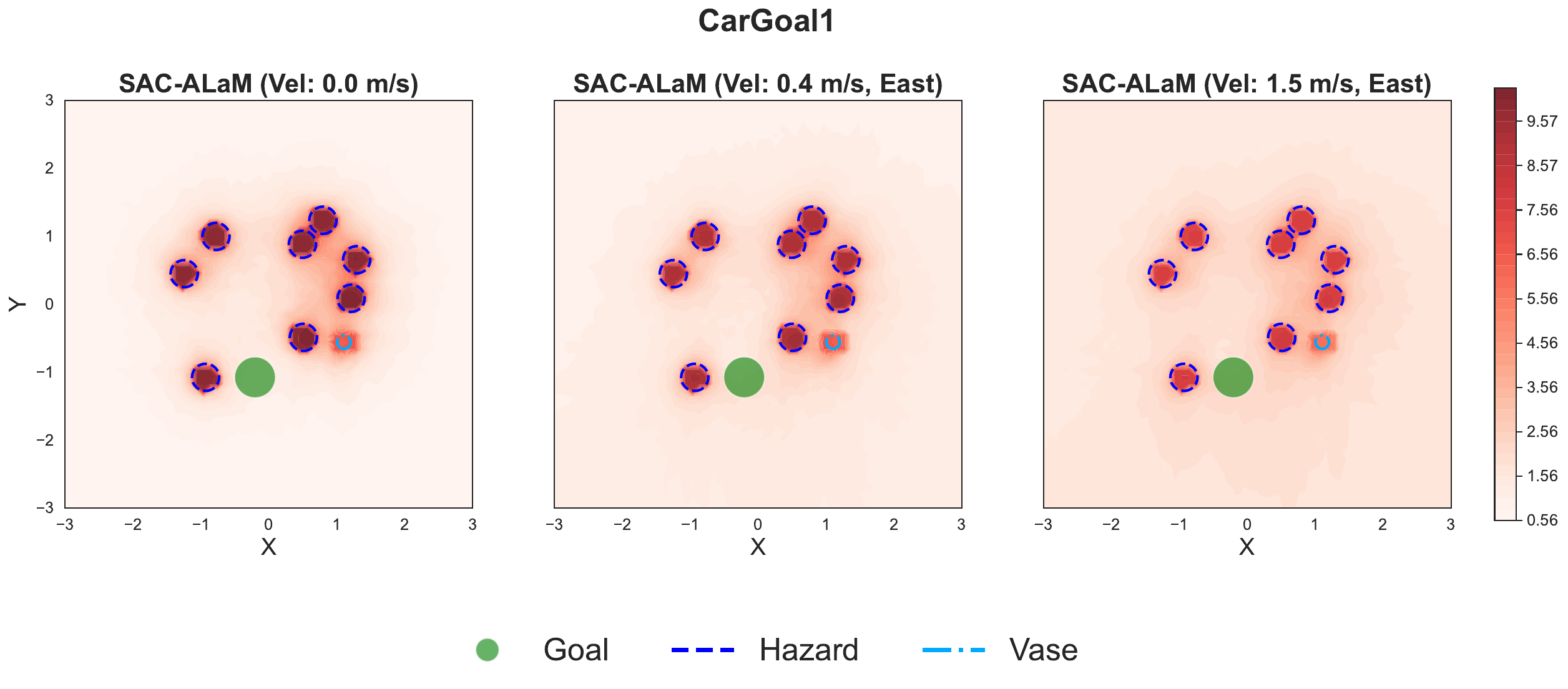}
    \end{minipage}\hfill
    \begin{minipage}{0.48\textwidth}
        \includegraphics[width=\linewidth]{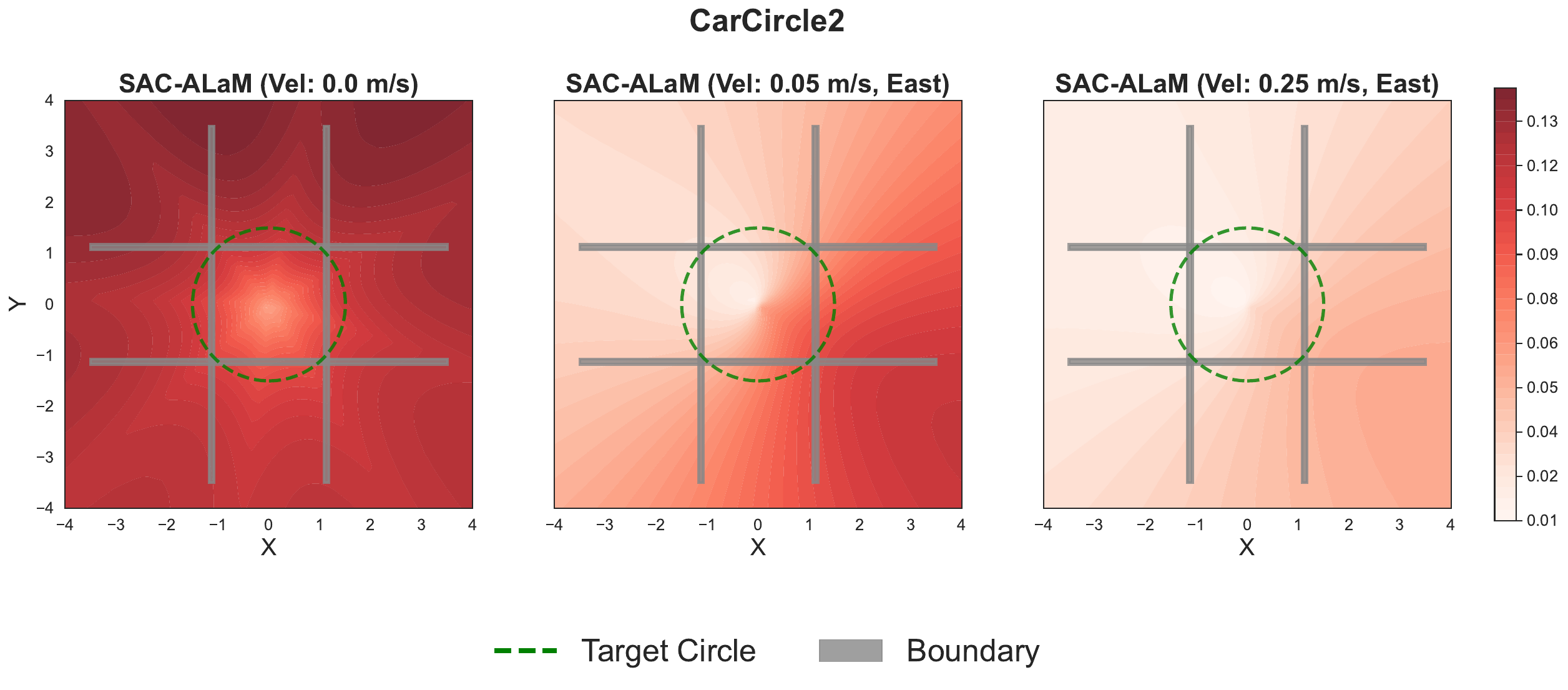}
    \end{minipage}

    \vspace{0.1cm}
    
    \begin{minipage}{0.48\textwidth}
        \includegraphics[width=\linewidth]{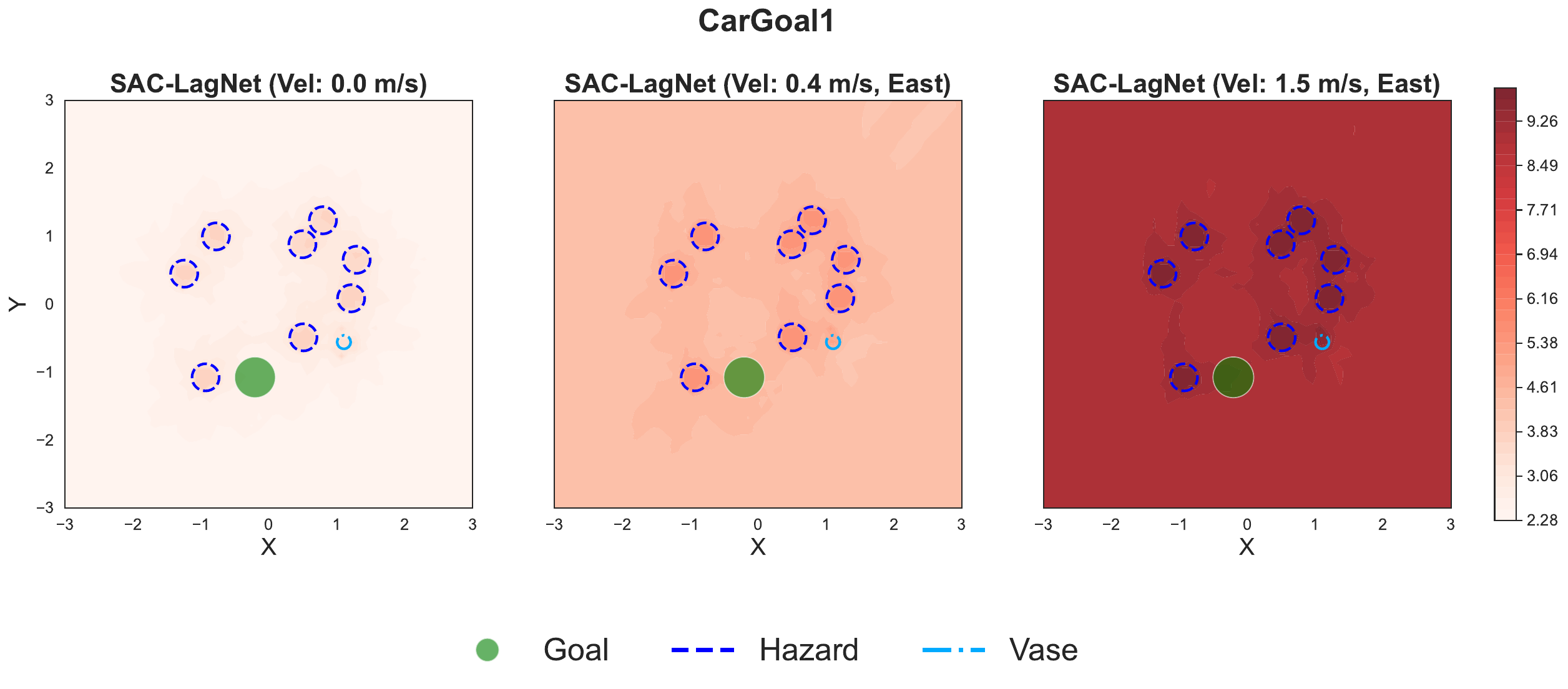}
    \end{minipage}\hfill
    \begin{minipage}{0.48\textwidth}
        \includegraphics[width=\linewidth]{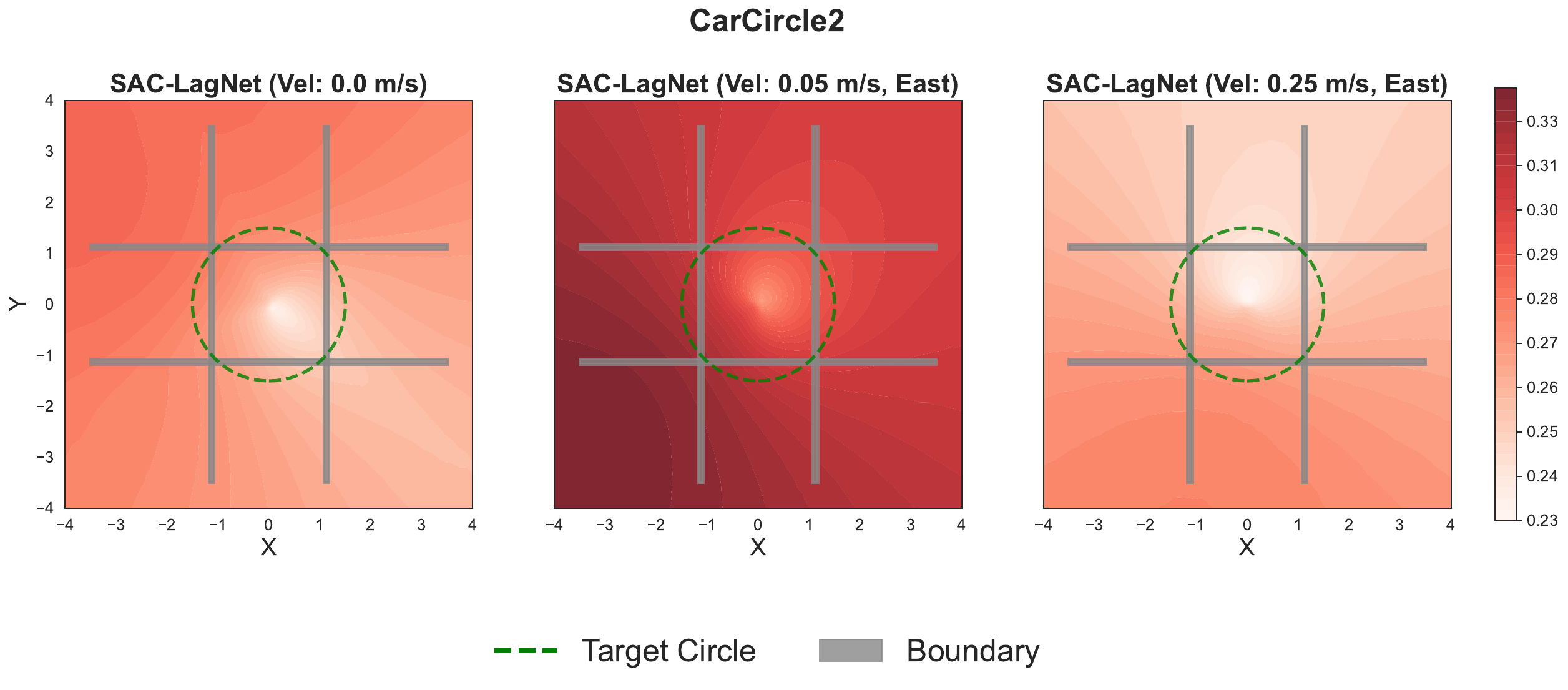}
    \end{minipage}
    
    \caption{Heatmaps of the multiplier for SAC-ALaM and SAC-LagNet across varying agent velocities.}
    \label{fig:heatmaps}
\end{figure*}

\section{Conclusion}
In this paper, we propose the ALaM framework for stable learning of state-wise multipliers in safe RL. By introducing an augmented Lagrangian with supervised regression to update the multiplier, ALaM stabilizes the training dynamics, with provable sequential convergence to an optimal policy. We further develop a practical safe RL algorithm, SAC-ALaM, by combining ALaM with soft actor-critic. Empirically, SAC-ALaM yields high-performing, safety-constrained policies while providing a well-calibrated multiplier for risk assessment.

\bibliographystyle{IEEEtran}
\bibliography{sac-ALaM}

\begin{IEEEbiography}[{\includegraphics[width=1in,height=1.25in,clip,keepaspectratio]{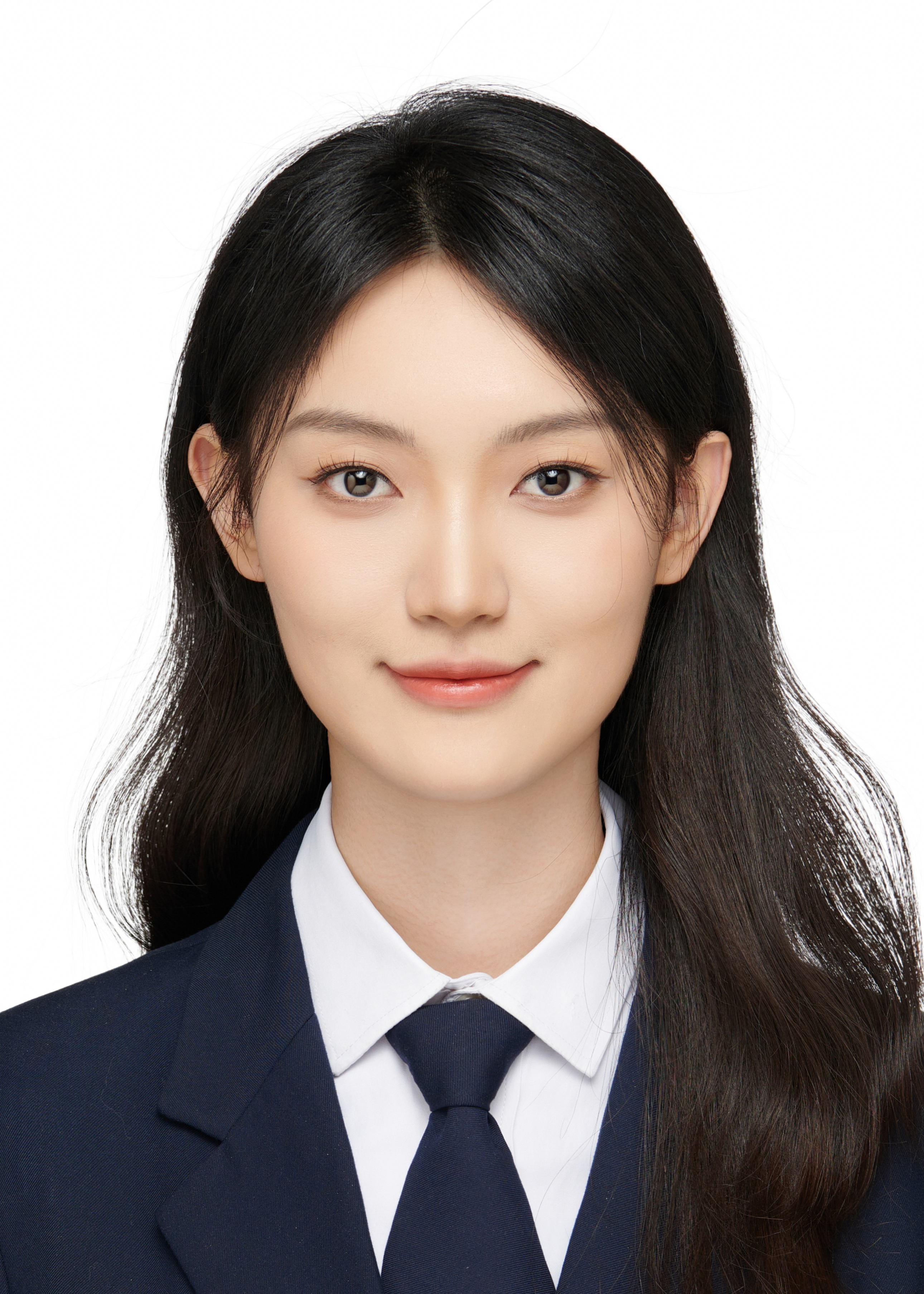}}]{Jiaming Zhang}
Jiaming Zhang received her B.S. degree in mathematics and applied mathematics from the School of Mathematics, Shandong University, Jinan, China in 2024. She is currently pursuing her Ph.D. degree in the Department of Mathematical Sciences, Tsinghua University, Beijing, China. Her research interests include continuous optimization and safe reinforcement learning.
\end{IEEEbiography}

\vspace{-10pt}

\begin{IEEEbiography}[{\includegraphics[width=1in,height=1.25in,clip,keepaspectratio]{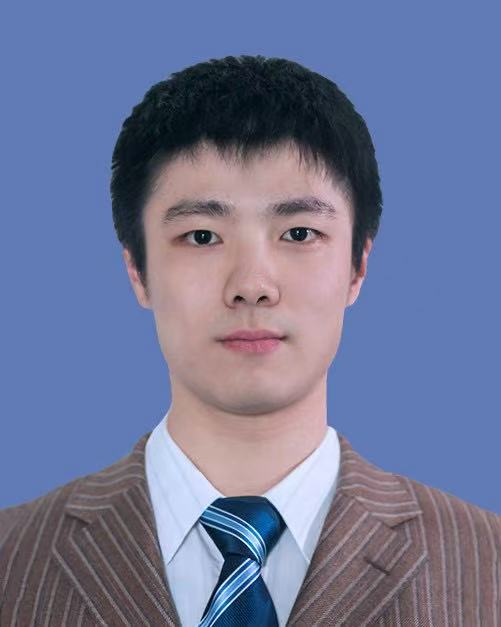}}]{Yujie Yang}
Yujie Yang received his B.S. degree in automotive engineering from the School of Vehicle and Mobility, Tsinghua University, Beijing, China in 2021. He is currently pursuing his Ph.D. degree in the School of Vehicle and Mobility, Tsinghua University, Beijing, China. His research interests include safe reinforcement learning and decision and control of autonomous vehicles.
\end{IEEEbiography}

\begin{IEEEbiography}[{\includegraphics[width=1in,height=1.25in,clip,keepaspectratio]{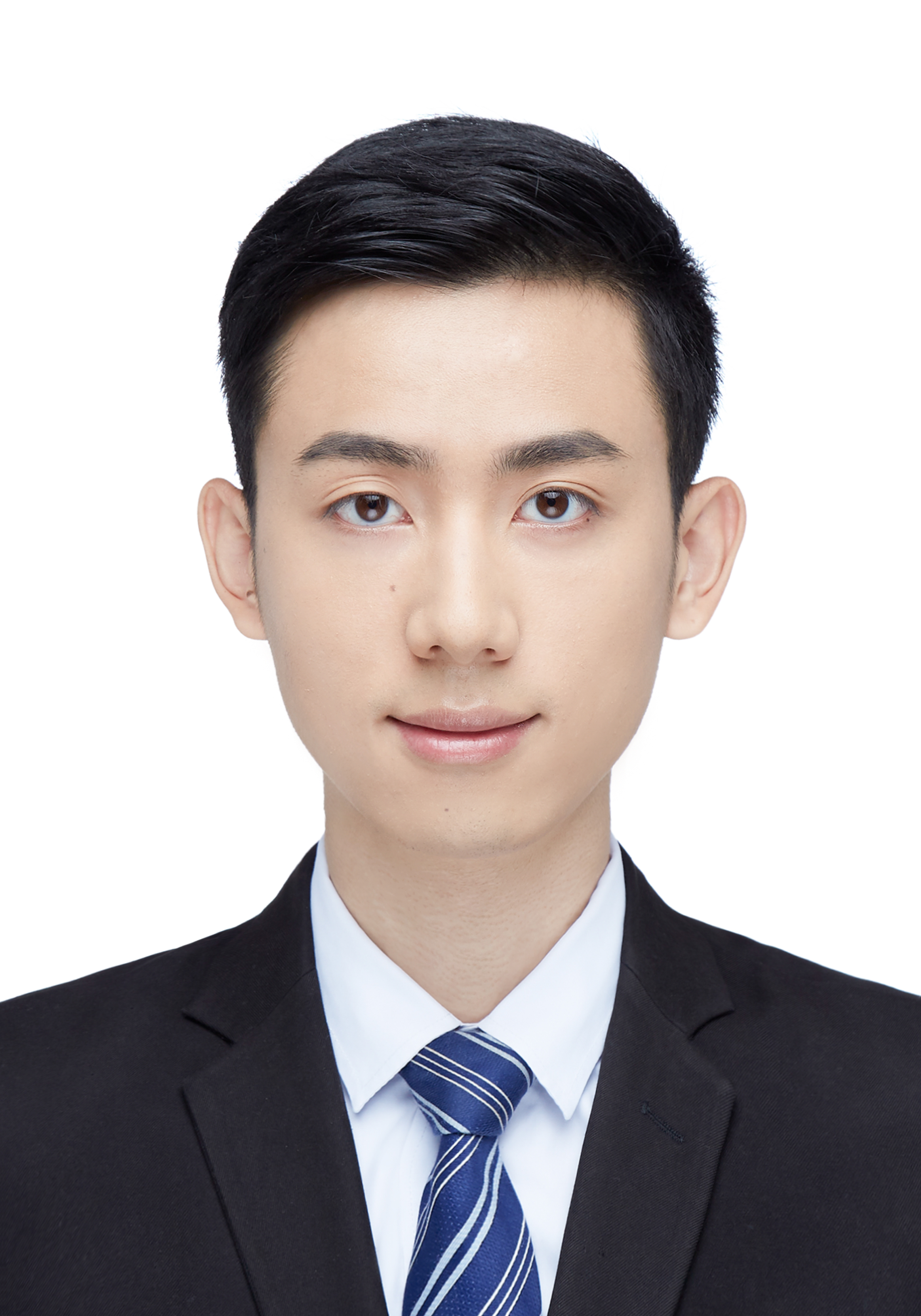}}]{Yao Lyu}
Yao Lyu received his B.Eng. degree in 2019 and his Ph.D. degree in 2025 from Tsinghua University, where he currently serves as a Postdoctoral Researcher in the School of Vehicle and Mobility. His active research interests include end-to-end autonomous driving, embodied artificial intelligence, deep reinforcement learning, and neural network optimization. He has authored over 20 peer-reviewed publications in top-tier venues and was awarded the CVCI 2023 Best Paper Award. Dr. Lyu actively contributes to the academic community, serving as a reviewer for IEEE TNNLS, IEEE Cyber, IEEE TITS, NeurIPS, ACC, CDC, etc.
\end{IEEEbiography}

\vspace{-10pt}

\begin{IEEEbiography}[{\includegraphics[width=1in,height=1.25in,clip,keepaspectratio]{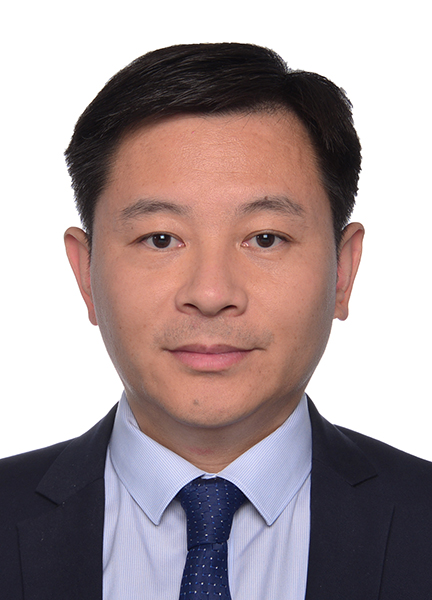}}]{Shengbo Eben Li}
Shengbo Eben Li (Senior Member, IEEE) received his M.S. and Ph.D. degrees from Tsinghua University in 2006 and 2009. He has worked at Stanford University, University of Michigan, and UC Berkeley. He is now a professor at Tsinghua University, working on intelligent vehicles and driver assistance, embodied intelligence for robotics, deep reinforcement learning, optimal control and estimation, etc. He is the author of over 250 peer-reviewed journal/conference papers, and co-inventor of over 40 patents. Dr. Li has received over 20 prestigious awards, including Youth Sci. \& Tech Award of Ministry of Education (annually 10 receivers in China), Natural Science Award of Chinese Association of Automation (First level), National Award for Progress in Sci \& Tech of China, and best (student) paper awards or finalists of IEEE ITSC, IEEE IVS, IET ITS, ICCAS, IFAC MECC, CAA CVCI, IEEE ICUS, CCCC, IEEE ITSM, L4DC, Automotive Innovation, etc. He was a member of Board Governor of IEEE ITS Society. He serves as the director of Technical Committee on AI of SAE-China, deputy director of Technical Committee on Vehicle Control and Intelligence of CAA, and the leader of AI working group in China Industry Innovation Alliance for ICVs.He also serves as Senior AE of IEEE OJ ITS, AEs of IEEE ITSM, IEEE TITS, IEEE TNNLS, IEEE TCST, and area chairs of ICLR and ICML, etc.
\end{IEEEbiography}

\vspace{-10pt}

\begin{IEEEbiography}[{\includegraphics[width=1in,height=1.25in,clip,keepaspectratio]{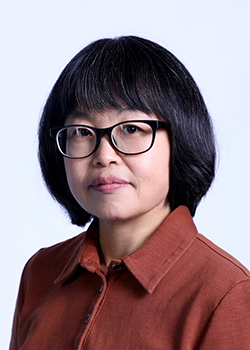}}]{Liping Zhang}
Liping Zhang is currently a tenured Professor in Department of Mathematical Sciences, Tsinghua University. She received her Ph.D. degree from the Academy of Mathematics and Systems Sciences, Chinese Academy of Sciences in 2001. Her research interests include continuous optimization, tensor analysis and computation, machine learning. She has published more than 70 research papers in international journals such as Mathematical Programming, SIAM Journal on Optimization, Mathematics of Computation, Mathematics of Operational Research, SIAM Journal on Matrix Analysis and Applications, Journal of Machine Learning Research, Expert Systems with Applications, etc.
\end{IEEEbiography}

\vfill

\end{document}